\crefname{section}{Sec.}{Secs.}
\Crefname{section}{Section}{Sections}
\Crefname{table}{Table}{Tables}
\crefname{table}{Tab.}{Tabs.}
\ificcvfinal\pagestyle{empty}\fi
\newcommand{\rvx}{\mathbf{x}}
\newcommand{\rvy}{\mathbf{y}}
\newcommand{\rvz}{\mathbf{z}}
\newcommand{\bbi}{\mathbf{1}}
\newcommand{\cx}{\mathcal{X}}
\newcommand{\cy}{\mathcal{Y}}
\newcommand{\cz}{\mathcal{Z}}
\newcommand{\cd}{\mathcal{D}}
\newcommand{\rmtrain}{\mathrm{train}}
\newcommand{\rmtest}{\mathrm{test}}
\theoremstyle{plain}
\newtheorem{definition}{Definition}
\newtheorem{theorem}{Theorem}
\newtheorem{lemma}{Lemma}
\theoremstyle{definition}
\begin{document}

%%%%%%%%% TITLE
\title{Preserving Silent Features for Domain Generalization}

\author{Chujie Zhao\qquad Tianren Zhang \qquad Feng Chen\\
Department of Automation, Tsinghua University\\
% Institution1 address\\
{\tt\small \{zhaocj22, zhangtr22\}@mails.tsinghua.edu.cn; chenfeng@mail.tsinghua.edu.cn}
% For a paper whose authors are all at the same institution,
% omit the following lines up until the closing ``}''.
% Additional authors and addresses can be added with ``\and'',
% just like the second author.
% To save space, use either the email address or home page, not both
% \and
% Second Author\\
% Institution2\\
% First line of institution2 address\\
% {\tt\small secondauthor@i2.org}
}

\maketitle
% Remove page # from the first page of camera-ready.
\ificcvfinal\thispagestyle{empty}\fi

%%%%%%%%% ABSTRACT
\begin{abstract}
   Domain generalization (DG) aims to improve the generalization ability of the model trained on several known training domains over unseen test domains. Previous work has shown that self-supervised contrastive pre-training improves the robustness of the model on downstream tasks. However, in this paper, we find that self-supervised models do not exhibit better generalization performance than supervised models pre-trained on the same dataset in the DG setting. We argue that this is owing to the fact that the richer intra-class discriminative features extracted by self-supervised contrastive learning, which we term silent features, are suppressed during supervised fine-tuning. These silent features are likely to contain features that are more generalizable on the test domain. In this work, we model and analyze this feature suppression phenomenon and theoretically prove that preserving silent features can achieve lower expected test domain risk under certain conditions. In light of this, we propose a simple yet effective method termed STEP (\underline{\textbf{S}}ilen\underline{\textbf{t}} F\underline{\textbf{e}}ature \underline{\textbf{P}}reservation) to improve the generalization performance of the self-supervised contrastive learning pre-trained model by alleviating the suppression of silent features during the supervised fine-tuning process. Experimental results show that STEP exhibits state-of-the-art performance on standard DG benchmarks with significant distribution shifts.
   \iffalse
   While many of the previous DG methods concentrated on extracting domain-invariant features from different distributions, in many scenarios there may exist non-invariant features that generalize better in test domains. In light of this, we propose a novel perspective to address the DG problem by leveraging silent features, i.e., features that contribute less to classification on training domains but are more discriminative on unknown test domains with significant distribution variations. However, silent features are frequently suppressed due to the intra-class invariance enforced by the supervised learning objective. To tackle this problem, we propose a simple yet effective method termed STEP (\underline{\textbf{S}}ilen\underline{\textbf{t}} F\underline{\textbf{e}}ature \underline{\textbf{P}}reservation) to improve generalization by mining silent features from intra-class discrimination via the self-supervised contrastive learning framework, while alleviating the suppression of silent features during the supervised fine-tuning stage and error fluctuations in the training process. We theoretically analyze the benefits of exploiting silent features and empirically evaluate our method on standard DG benchmarks. Experimental results show that STEP exhibits state-of-the-art performance on datasets with significant distribution shifts.
   \fi
\end{abstract}

%%%%%%%%% BODY TEXT
\section{Introduction}
\label{sec:intro}

Recent advances in computer vision tasks utilizing deep learning~\cite{ILSVRC15} under the independently and  identically distributed (i.i.d.) assumption~\cite{mohri2018foundations} are truly impressive.
However, simply deploying deep models in real-world situations frequently results in considerable performance degradation due to different distributions between training and test data, such as image renditions for image recognition and weather conditions for autonomous driving~\cite{Hendrycks_2021_ICCV, Wu_2021_ICCV}.

The goal of Domain Generalization (DG) is to enhance the out-of-distribution (OOD) generalization by mimicking distribution shifts by manually dividing data into several domains according to their characteristics, e.g. style, context, camera types, etc. 
In contrast to domain adaptation, DG assumes to train models on source domains that can generalize to completely inaccessible target domains, which is a more challenging and realistic setting~\cite{pmlr-v28-muandet13}.

\begin{figure}[t]
  \centering
   \begin{subfigure}{0.9\linewidth}
    \includegraphics[width=1.0\linewidth]{./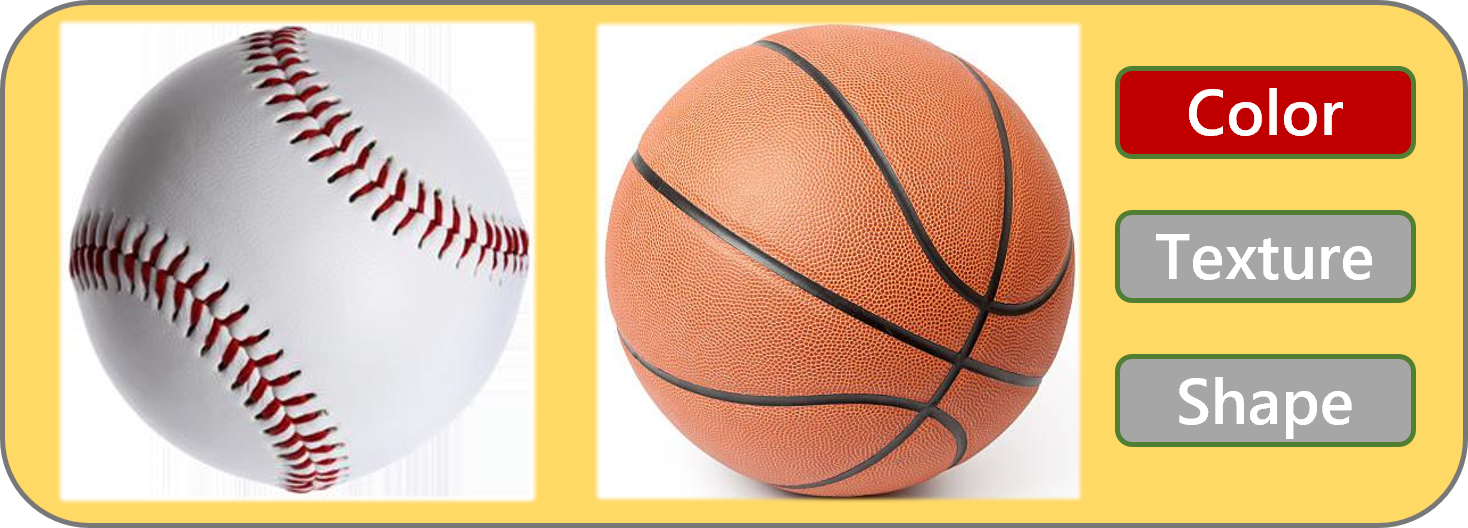}
    \caption{real domain}
    \label{fig:real}
  \end{subfigure}
  \hfill
  \begin{subfigure}{0.9\linewidth}
    \includegraphics[width=1.0\linewidth]{./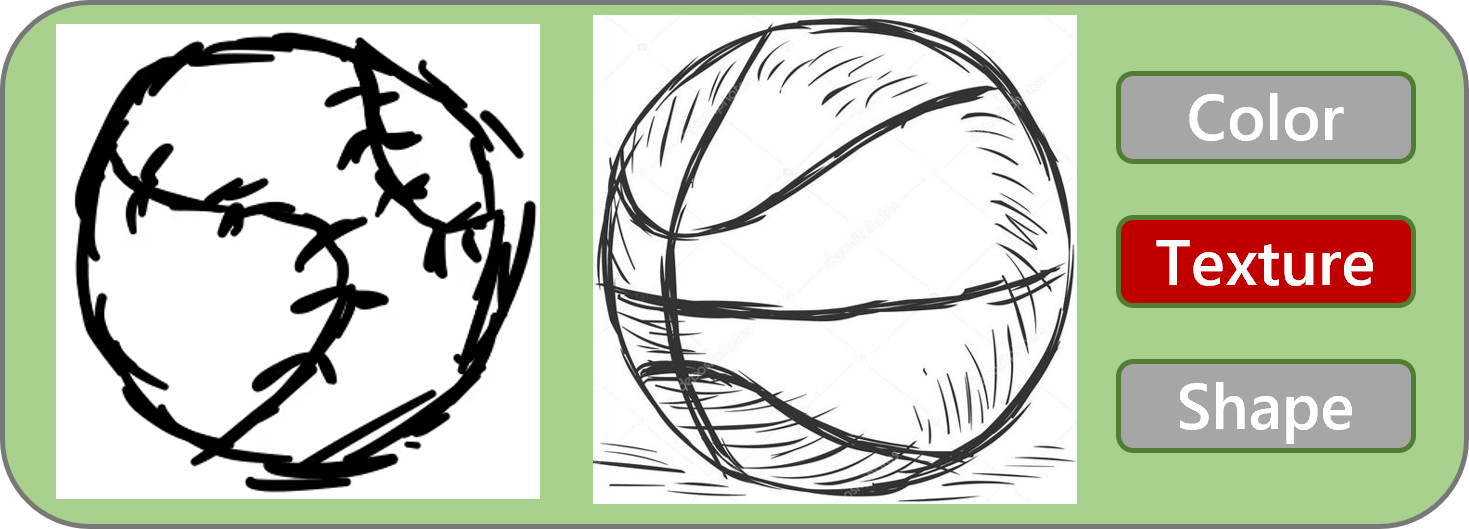}
    \caption{sketch domain}
    \label{fig:sketch}
  \end{subfigure}

   \caption{Examples of the features used to recognize baseball and basketball in DomainNet~\cite{Peng_2019_ICCV_domainnet}, dominant features are highlighted in red, whereas silent features are gray.}
   \label{fig:feature}
\end{figure}

Previous research in the DG literature generally employs a paradigm of utilizing the backbone supervised pre-trained on ImageNet~\cite{deng2009imagenet} and then fine-tuning the model on downstream datasets to achieve better generalization performance. For example, the common baseline, ERM~\cite{vapnik1999nature}, applies supervised fine-tuning on all source domains of the downstream DG dataset together and outperforms most of the DG methods on DomainBed~\cite{gulrajani2021in}. Existing studies have demonstrated that self-supervised pre-training can improve the robustness of models on downstream tasks~\cite{NEURIPS2019_selfrobust}, and some recent self-supervised contrastive learning methods based on instance discrimination~\cite{Wu_2018_CVPR,He_2020_CVPR,pmlr-v119-chen20j,NEURIPS2020_70feb62b} perform strongly on a wide range of tasks. Unfortunately, this is not the case in the DG setting~\cite{miro}. In this work, we also empirically find that the direct combination of self-supervised contrastive learning models pre-trained on ImageNet and ERM does not yield better performance, and sometimes even performs worse than supervised learning pre-trained models. We conjecture that this is caused by the downstream ERM method's supervised fine-tuning phase, which suppresses the intra-class features extracted by the self-supervised contrastive learning algorithm. This feature suppression phenomenon is due to supervised learning’s strong label bias forcing the model to pursue discriminative features between categories excessively, which neglects intra-class features that are weaker for classification tasks on available data distributions. Thus, we refer to this subset of the features that extracted by self-supervised contrastive learning as \textit{silent features}, and their counterparts as dominant features. As shown in~\cref{fig:feature}, we provide a visual example that the color is the dominant feature on the real domain, which is most discriminative for the classification of basketball and tennis, but completely useless on the sketch domain. The texture feature obtained by comparison within the basketball category is instead more discriminative on the sketch domain.

In this paper, we formulate the feature suppression process and theoretically demonstrate that lower test domain risk can be achieved by leveraging silent features under certain conditions. Therefore, we attempt to improve the DG performance of the model by preserving silent features obtained by self-supervised contrast learning pre-training as much as possible.
We propose a simple yet effective method called \underline{\textbf{S}}ilen\underline{\textbf{t}} F\underline{\textbf{e}}ature \underline{\textbf{P}}reservation (STEP) to alleviate the suppression of silent features caused by supervised fine-tuning and to improve the convergence stability of self-supervised contrastive pre-trained backbones.
STEP consists of two main components: 1) linear probing then full fine-tuning (LP-FT)~\cite{kumar2022finetuning} strategy to prevent the initial features from being suppressed during downstream fine-tuning; 2) Stochastic Weight Averaging Densely (SWAD)~\cite{NEURIPS2021_swad} technique to find flat minima by averaging the model's weights over time steps. Our approach reveals the strong OOD generalization ability of the self-supervised contrastive pre-trained model.

To rigorously evaluate our proposed method, we mainly followed the network architectures and model selection criteria of  DomainBed~\cite{gulrajani2021in}.
Experimental results on several widely used standard DG datasets validate our theory while showing that the performance of our simple approach is comparable to the current state-of-the-art methods. We also conduct comprehensive ablation studies to justify the role of each component in STEP and the impact of different self-supervised contrastive learning algorithms.

Our main contributions are as follows:
\begin{itemize}
\item We provide a novel perspective for enhancing the OOD generalization ability, i.e., preserving silent features of self-supervised contrastive pre-trained models.
\item We theoretically formulate the feature suppression during supervised fine-tuning and prove the superiority of preserving silent features.
\item We propose a strong baseline STEP orthogonal to previous DG methods that leverage self-supervised contrastive learning models and provide richer pre-trained features to improve the generalization performance.
\end{itemize}

%------------------------------------------------------------------------
\section{Related Work}
\label{sec:relatedwork}

\textbf{Domain Generalization.} Domain Generalization aims to improve the model's generalization performance on inaccessible test domains utilizing a series of source domains.
The current DG methods can be generally divided into four categories: 1) Domain-invariant feature learning, i.e., learning invariant representations on source domains by adding regularization terms. DICA~\cite{pmlr-v28-muandet13} and CIDDG~\cite{li2018ciddg} align different distributions of source domains, while CORAL~\cite{sun2016deep} and MMD-AAE~\cite{Li_2018_CVPR_mmdaae} minimize statistical metrics. There are also causality-inspired methods~\cite{arjovsky2019invariant,pmlr-v139-mahajan21b,Lv_2022_CVPR_cirl}.  Instead of extracting invariant features, we use the richer silent features obtained by self-supervised learning to achieve better generalization.
2) Domain-specific feature learning, i.e., capturing domain-specific information strongly correlated with labels in individual training domains as a complement to domain-invariant features~\cite{NEURIPS2021_exploit_ds,specif_invar,lowrank}. The silent features can be viewed as specific features on test domains to some extent, but they cannot be well extracted and preserved by existing methods since they are not as relevant with labels in training.
3) Data manipulation. L2A-OT~\cite{zhou2020learning_L2A-OT} and DDAIG~\cite{zhou2020ddaig} use generative models to create novel samples, ~\cite{domainmixup,zhou2021mixstyle} generate samples based on mixup\cite{zhang2018mixup}, and~\cite{Xu_2021_CVPR_fourier} proposes a Fourier-based data augmentation strategy.
4) Learning strategies. DAEL~\cite{9540778dael}, EoA~\cite{arpit2021ensemble} use ensemble learning approaches, while~\cite{Mansilla_2021_ICCV_gradient} applies gradient surgery. MLDG~\cite{Li_Yang_Song_Hospedales_2018_mldg} and Metareg~\cite{NEURIPS2018_647bba34_metareg} employ meta-learning to simulate domain shifts.
RSC~\cite{rsc} iteratively drops the dominant features, but still extracts features based on their contribution to the classification of the training data.
Some previous works have also explored contrastive learning in DG, SelfReg~\cite{Kim_2021_ICCV_selfreg} incorporates a regularization for positive pairs alignment, and PCL~\cite{Yao_2022_CVPR_pcl} uses a proxy-based contrastive learning approach.
Different from them, we focus on the semantic information gained from the intra-class sample contrast.

\textbf{Test-time Adaptation.} Test-time adaptation focus on recovering specific features in the target domain using limited unlabeled data during inference time.~\cite{Dubey_2021_CVPR_adapt} constructs an domain-adaptive classifier, CoTTA~\cite{Wang_2022_CVPR_cadapt} incorporates continual learning and T3A~\cite{NEURIPS2021_testtime_adjust} adjust the classifier via the pseudo-prototype. Similarly, STEP concentrates on features that are discriminative on the target domain, but does not access the test data. It strives to preserve richer silent features during training in the hope that some of them are useful in the test distribution. STEP can be further combined with the TTA methods to enhance the generalization ability while maintaining the performance under similar distribution by excluding the redundant part of the preserved silent features.

\textbf{Self-supervised Contrastive Learning.} The key idea of contrastive learning is to push dissimilar inputs apart in the feature space while bringing similar inputs closer. Recently, self-supervised contrastive learning methods based on instance discrimination~\cite{Wu_2018_CVPR} pretext tasks have achieved promising results in computer vision. MoCo~\cite{He_2020_CVPR} introduces the momentum encoder, SimCLR~\cite{pmlr-v119-chen20j} applies projection heads to boost performance, and SwAV~\cite{NEURIPS2020_70feb62b} integrates self-supervised contrastive learning with clustering methods. Recent studies further propose self-supervised contrastive learning frameworks without the usage of negative samples~\cite{NEURIPS2020_f3ada80d_BYOL,Chen_2021_CVPR_simsiam,pmlr-v139-barlow}. Some related theoretical research also inspired our work.~\cite{NEURIPS2021_27debb43_graph} introduced graph to model self-supervised contrastive learning, and~\cite{pmlr-v119-wang20k_alignment_uniformity} identified two crucial properties of contrastive loss: alignment and uniformity, the latter of which motivated us to exploit the rich intra-class information preserved by self-supervised learning to enhance generalization performance.

\section{Problem Formulation}
\label{sec:formulation}
 Let $\cx$ and $\cy$ be an input space and an output space, respectively. Let $\ell:\cy\times\cy\to\mathbb{R}$ be a loss function. A \emph{domain} $\cd$ defines a joint distribution $\cd(\rvx,\rvy)$ over $\cx\times\cy$. During training, we have $(\rvx,\rvy)$ examples drawn from $n$ training domains $\cd_\rmtrain^1,\ldots,\cd_\rmtrain^n$. The aim of a DG algorithm is to learn a predictor, i.e., a parameterized function $h:\cx\to\cy$ that minimizes the expected risk $R_\rmtest(h) \vcentcolon= \mathbb{E}_{(\rvx,\rvy)\sim\cd_\rmtest}\ell[h(\rvx),\rvy]$ on a test domain $\cd_\rmtest\notin\{\cd_\rmtrain^1,\ldots,\cd_\rmtrain^n\}$ that is inaccessible during training. Throughout the paper, we focus on classification tasks with $\ell$ being the 0-1 loss defined as $\ell(\rvy,\rvy') \vcentcolon=\bbi_{\rvy\ne\rvy'}$. We assume that each model $h$ factorizes $h = g\circ \Phi$, where $\Phi:\cx\to\cz$ is a featurizer that maps each input to a feature space $\cz$ (e.g., the vector space corresponding to the output of the penultimate layer of a deep neural network), and $g:\cz\to\cy$ is a classifier on top of the feature.

\section{The Benefits of Preserving Silent Features}
\label{sec:theory}

In this section, we theoretically analyze the benefits of leveraging training-domain silent features in DG. We study the case where training and test data are i.i.d. drawn from two domains $\cd_\rmtrain$ and $\cd_\rmtest$, respectively. Note that here $\cd_\rmtrain$ can itself represent a distributional mixture of several training domains as in a common practical setup. For simplicity, we consider a binary classification problem with $\cy=\{1,-1\}$, but extending our results to the multi-class classification setting is straightforward.

It has been known that without any assumptions on the relation between $\cd_\rmtrain$ and $\cd_\rmtest$, DG is futile since $\cd_\rmtest$ can be chosen adversarially to yield a large risk given any model. Hence, to incorporate necessary structural assumptions on data, we adopt a Gaussian data generation model that is similar to the ones used in the DG literature~\cite{rosenfeld_risks_2021,chen_iterative_2021,wang_provable_2022}. Concretely, we assume that the data is generated through the following process, as depicted by Fig.~\ref{fig:dgm}:

\begin{enumerate}[noitemsep]
	\item A label $\rvy\in\{1, -1\}$ is drawn with a fixed probability:
	\begin{equation*}
	\rvy = \left\{\begin{array}{ll} 1, &\ \text{w.p.}\ \eta \\ -1, &\ \text{w.p.}\ 1 - \eta \end{array} \right.\label{eq:dgm_y}
	\end{equation*}
	\item Two types of features are both drawn according to a Gaussian. For $\mathcal{D}_\mathrm{train}$, we have
	\begin{equation*}
	\rvz_d\sim\mathcal{N}(\rvy\cdot\bm{\mu}_d, \sigma_d^2\bm{I}),\quad \rvz_s\sim\mathcal{N}(\rvy\cdot\bm{\mu}_s, \sigma_s^2\bm{I}),\label{eq:dgm_zd}
	\end{equation*}
	while for $\mathcal{D}_\mathrm{test}$, we have
	\begin{equation*}
	\rvz_d\sim\mathcal{N}(\rvy\cdot\bm{\mu}_d, \sigma_d^2\bm{I}),\quad \rvz_s\sim\mathcal{N}(\rvy\cdot\gamma \bm{\mu}_s, \sigma_s^2\bm{I}),\label{eq:dgm_zs}
	\end{equation*}
	where $\bm{\mu}_d\in \mathbb{R}^{p_d}$, $\bm{\mu}_s\in \mathbb{R}^{p_s}$, and $\gamma\in\mathbb{R},\gamma\ne 1$ is a scaling factor. Here, the \emph{dominant feature} $\rvz_d$ has identical distributions in training and test domains, while the \emph{silent feature} $\rvz_s$ may have different distributions in training and test domains depending on $\gamma$. We assume that there exists a constant $C>0$ and $\lVert\bm{\mu}_s\rVert_2^2 < C$ to capture our intuition that silent feature is not very discriminative in the training domain.
	\item The observed input $\rvx\in\mathcal{X}$ is generated by an injective function $f$: $\rvx = f(\rvz_d,\rvz_s)$.
\end{enumerate}
\begin{figure}[tbp]
\centering
\subcaptionbox{\label{subfig:train}Training domain}{\includegraphics[width=0.44\linewidth]{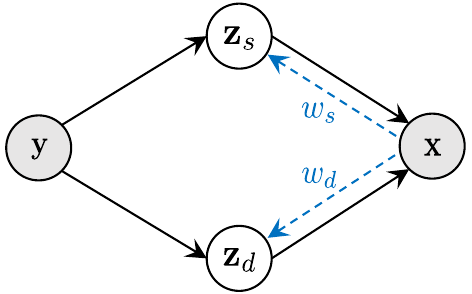}}
\hspace{1em}
\subcaptionbox{\label{subfig:test}Test domain}{\includegraphics[width=0.44\linewidth]{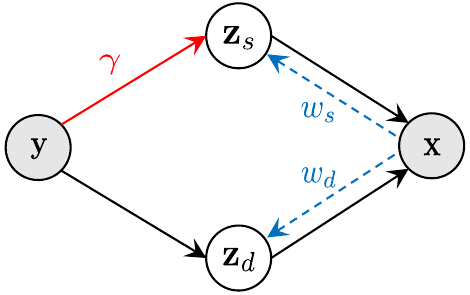}}
\caption{The causal graph of our data generation model in training and test domains. Shading represents that the variable is observed. Solid lines represent the data generation process, while dashed lines represent feature suppression.}
\label{fig:dgm}
\end{figure}
The above model provides a rigorous way to characterize the role of silent feature under different conditions: when $\gamma > 1$, silent feature is more predictive in $\cd_\rmtest$ than in $\cd_\rmtrain$; when $0<\gamma<1$, silent feature is less predictive in $\cd_\rmtest$ than in $\cd_\rmtrain$; when $\gamma < 0$, the correlation between silent feature and label is reversed in $\cd_\rmtest$. Based on this, previous DG methods that seek invariant representations~\cite{arjovsky2019invariant,sun2016deep,li2018domain,JMLR:v17:15-239_dann} can be viewed as finding a minimax-optimal predictor over all possible $\gamma$, boiling down to removing $\bm{\mu}_s$ completely. Meanwhile, if $C$ is sufficiently small, ERM also has the tendency to extract only $\bm{\mu}_d$ from $\rvx$ since $\bm{\mu}_s$ is not predictive enough. To formalize this, we introduce the notion of \emph{feature suppression}:

\begin{definition}[Feature suppression]
Given $w_d, w_s \in [0, 1]$, we say a featurizer $\Phi_{(w_d, w_s)}$ \emph{suppresses} $\rvz_d$ and $\rvz_s$ to $w_d$ and $w_s$ if $\Phi_{(w_d, w_s)}(\rvx) = (\widetilde{\rvz}_d, \widetilde{\rvz}_s)$, where
\begin{equation*}
\widetilde{\rvz}_d\sim\mathcal{N}(\rvy\cdot w_d\bm{\mu}_d, \sigma_d^2\bm{I}),\quad \widetilde{\rvz}_s\sim\mathcal{N}(\rvy\cdot w_s\bm{\mu}_s, \sigma_s^2\bm{I})\label{eq:suppress_train}
\end{equation*}
for $(\rvx,\rvy)\sim \mathcal{D}_\mathrm{train}$, and
\begin{equation*}
\widetilde{\rvz}_d\sim\mathcal{N}(\rvy\cdot w_d\bm{\mu}_d, \sigma_d^2\bm{I}),\quad \widetilde{\rvz}_s\sim\mathcal{N}(\rvy\cdot w_s\gamma \bm{\mu}_s, \sigma_s^2\bm{I})\label{eq:suppress_test}
\end{equation*}
for $(\rvx,\rvy)\sim \mathcal{D}_\mathrm{test}$.
\label{def:suppression}
\end{definition}

Under Definition~\ref{def:suppression}, prior methods that only uses dominant features elicit the featurizer $\Phi_{(1,0)}$ that suppresses silent feature to $w_s = 0$ while keeping dominant feature intact. However, we argue that this can be too conservative since we often do not expect silent features to act adversarially as $\gamma < 0$ in real-world datasets. In the following, we present a general result that characterizes the expected test domain risk $R_\rmtest$ \emph{as a function of $w_s, w_d$ and $\gamma$}.

\begin{theorem}[Expected test domain risk]
Assume that $\eta = \frac{1}{2}$ and $\sigma_d^2=\sigma_s^2=\sigma^2$. Then,
for any $w_d,w_s\in[0,1]$, the predictor $g^*\circ\Phi_{(w_d,w_s)}$, composed of featurizer $\Phi_{(w_d,w_s)}$ and training-domain Bayes classifier $g^*$ with respect to $\Phi_{(w_d,w_s)}$, gives the expected test domain risk of
\begin{equation*}
R_\rmtest\left(g^*\circ\Phi_{(w_d,w_s)}\right) = F\left(-\frac{1}{\sigma}\cdot\frac{{w}_d^2\lVert\bm{\mu}_d\rVert_2^2 + \gamma{w}_s^2\lVert\bm{\mu}_s\rVert_2^2}{\sqrt{ {w}_d^2\lVert\bm{\mu}_d\rVert_2^2 +{w}_s^2 \lVert \bm{\mu}_s \rVert_2^2}}\right),
\end{equation*}
where $F$ is the CDF of a standard Gaussian $\mathcal{N}(0,1)$.
\label{theo:error}
\end{theorem}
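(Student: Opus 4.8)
The plan is to compute the training-domain Bayes classifier $g^*$ in closed form, and then evaluate its risk by pushing its (linear) decision statistic through the \emph{test}-domain feature distribution and recognizing the answer as a Gaussian tail probability.

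First I would derive $g^*$. Since $\eta=\tfrac12$ the label is uniform, so the training-domain Bayes rule on the features $(\widetilde{\rvz}_d,\widetilde{\rvz}_s)$ is the maximum-likelihood rule between the two class-conditional laws, which under $\Phi_{(w_d,w_s)}$ are products of Gaussians $\mathcal{N}(\pm w_d\bm{\mu}_d,\sigma^2\bm{I})\otimes\mathcal{N}(\pm w_s\bm{\mu}_s,\sigma^2\bm{I})$ with the common covariance $\sigma^2\bm{I}$. The quadratic terms in the log-likelihood ratio cancel, leaving a linear statistic, so
\[
g^*(\widetilde{\rvz}_d,\widetilde{\rvz}_s) = \mathrm{sign}\bigl(w_d\langle\widetilde{\rvz}_d,\bm{\mu}_d\rangle + w_s\langle\widetilde{\rvz}_s,\bm{\mu}_s\rangle\bigr).
\]

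Next I would set $S:=w_d\langle\widetilde{\rvz}_d,\bm{\mu}_d\rangle + w_s\langle\widetilde{\rvz}_s,\bm{\mu}_s\rangle$ and exploit symmetry: the test joint law of $(\rvy,\widetilde{\rvz}_d,\widetilde{\rvz}_s)$ is invariant under the sign flip $(\rvy,\widetilde{\rvz}_d,\widetilde{\rvz}_s)\mapsto(-\rvy,-\widetilde{\rvz}_d,-\widetilde{\rvz}_s)$, which negates $S$; together with $\eta=\tfrac12$ this yields $R_\rmtest(g^*\circ\Phi_{(w_d,w_s)}) = \Pr_{\cd_\rmtest}[S<0\mid\rvy=1]$, the tie $S=0$ having probability zero unless $\bm{\mu}_d=\bm{\mu}_s=\bm 0$. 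Conditioned on $\rvy=1$ in the test domain, $\widetilde{\rvz}_d\sim\mathcal{N}(w_d\bm{\mu}_d,\sigma^2\bm{I})$ and $\widetilde{\rvz}_s\sim\mathcal{N}(w_s\gamma\bm{\mu}_s,\sigma^2\bm{I})$ are independent, so the projections $\langle\widetilde{\rvz}_d,\bm{\mu}_d\rangle$ and $\langle\widetilde{\rvz}_s,\bm{\mu}_s\rangle$ are independent scalar Gaussians with means $w_d\lVert\bm{\mu}_d\rVert_2^2$, $w_s\gamma\lVert\bm{\mu}_s\rVert_2^2$ and variances $\sigma^2\lVert\bm{\mu}_d\rVert_2^2$, $\sigma^2\lVert\bm{\mu}_s\rVert_2^2$. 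Hence $S$ is Gaussian with mean $w_d^2\lVert\bm{\mu}_d\rVert_2^2+\gamma w_s^2\lVert\bm{\mu}_s\rVert_2^2$ and variance $\sigma^2\bigl(w_d^2\lVert\bm{\mu}_d\rVert_2^2+w_s^2\lVert\bm{\mu}_s\rVert_2^2\bigr)$, and standardizing gives $\Pr[S<0\mid\rvy=1]=F\bigl(-\mathbb{E}[S]/\sqrt{\operatorname{Var}(S)}\bigr)$, which is exactly the claimed expression.

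The hard part is conceptual rather than computational: keeping straight that the classifier optimal for $\cd_\rmtrain$ is the one evaluated under $\cd_\rmtest$, and noticing that projecting the multivariate feature Gaussians onto the \emph{fixed} directions $\bm{\mu}_d,\bm{\mu}_s$ collapses them to the one-dimensional parameters above; the rest is routine Gaussian-moment bookkeeping. One caveat I would flag explicitly is the degenerate case $w_d=w_s=0$ (equivalently $\bm{\mu}_d=\bm{\mu}_s=\bm 0$), where $g^*\circ\Phi_{(w_d,w_s)}$ is vacuous and the stated formula should be read as its limiting value $F(0)=\tfrac12$.
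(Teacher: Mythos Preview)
Your proof is correct and follows essentially the same route as the paper: derive the training-domain Bayes classifier as a linear rule in $(\widetilde{\rvz}_d,\widetilde{\rvz}_s)$, then compute the test risk as a Gaussian tail probability by pushing the linear statistic through the test-domain feature law. The only cosmetic difference is that the paper first establishes the result for general $\eta,\sigma_d,\sigma_s$ via two lemmas (one for the Bayes classifier, one for the risk of an arbitrary linear classifier) and then specializes, whereas you work directly under $\eta=\tfrac12$, $\sigma_d=\sigma_s=\sigma$ and use the sign-flip symmetry to collapse the two class-conditional error terms into one.
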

Note that the assumptions on $\eta$, $\sigma_d^2$, and $\sigma_s^2$ are purely for the clarity of presentation, and the full theorem is presented in Appendix~\ref{appendix:proofs} along with the proof. Theo.~\ref{theo:error} shows that when $\gamma < 0$, the invariant featurizer $\Phi_{(1,0)}$ is indeed optimal; however, in other cases explicitly using silent feature by increasing $w_s$ may outperform the invariant featurizer, depending on the more nuanced relations between $\bm{\mu}_s,\bm{\mu}_d$ and $\gamma$. We provide detailed discussion in Appendix~\ref{appendix:proofs}.

%------------------------------------------------------------------------
\section{Method}

\subsection{Motivation}
\label{subsec:motivation}

Before presenting the details of our method, we first clarify our motivation. Previous research has shown that self-supervised contrastive pre-trained models~\cite{chen2020improved_moco_v2,pmlr-v119-chen20j,NEURIPS2020_70feb62b} have better robustness on downstream tasks. However, we empirically find that directly employing the model pre-trained on ImageNet via self-supervised contrastive learning in the DG setting does not obtain higher generalization performance than the conventional supervised learning pre-trained model. As shown in~\cref{tab:motivation}, replacing the supervised learning pre-trained model in ERM with a self-supervised contrastive learning pre-trained model (SwAV~\cite{NEURIPS2020_70feb62b}) resulted in a 1.2 percentage point(pp) decrease in the average test domain accuracies on five DG benchmarks (both using ResNet-50~\cite{He_2016_CVPR_resnet} trained on ImageNet for a fair comparison). We conjecture that this is caused by feature suppression as in~\cref{def:suppression}. Compared to supervised learning, self-supervised contrastive learning introduces more intra-class discriminative features during negative sample contrast within individual classes. We refer to these features as silent features since they have little classification contribution on the training domains and are therefore suppressed during the ERM's supervised fine-tuning procedure. Silent features may be more discriminative on the target domain with significant distribution shifts, so the suppression of silent features makes the self-supervised contrastive learning pre-trained model perform poorly on the DG benchmarks. According to~\cref{theo:error}, keeping silent features, in this case, yield a smaller expected test domain risk.
Hence, we can preserve more silent features and promote OOD generalization at the sacrifice of little i.i.d generalization performance, according to our theoretical analysis in \cref{appendix:proofs} and some prior works discussing the trade-off between i.i.d. and OOD generalization abilities~\cite{pmlr-v119-raghunathan20a_tradeoff1,xie2021innout_tradeoff2}.

\noindent
\textbf{The relative weight of silent features in pre-trained models.} In fact, the downstream DG datasets and the ImageNet pre-training classes are different, and thus there are also existing silent features extracted from redundant categories in the supervised learning pre-trained model. Of course, silent features extracted from intra-class information are richer in the self-supervised contrastive learning pre-trained model, so the relative weight of silent features is higher. In addition, different self-supervised contrastive learning algorithms provide various weights of silent features in the pre-trained models, which we will further discuss in~\cref{subsec:ablation}. We use SwAV as a demo in this paper.

\begin{table*}
  \caption{Comparison of self-supervised contrastive learning and supervised learning models pre-trained on ImageNet.}
  \centering
  \begin{tabular}{l|ccccc|c}
    \toprule
    Methods & VLCS & PACS & Office-Home & TerraInc & DomainNet & Avg.($\Delta$)\\
    \midrule
    Supervised+ERM & \textbf{77.9 \bm{$\pm$} 0.4} & 84.6 $\pm$ 0.5 & \textbf{66.6 \bm{$\pm$} 0.7} & \textbf{48.6 \bm{$\pm$} 3.0} & 42.3 $\pm$ 0.2 & \textbf{64.0} \\
    SwAV~\cite{NEURIPS2020_70feb62b}+ERM & 77.6 $\pm$ 0.7 & \textbf{84.7 \bm{$\pm$} 1.5} & 63.1 $\pm$ 0.3 & 46.1 $\pm$ 2.9 & \textbf{42.4 \bm{$\pm$} 0.0} & 62.8(-1.2) \\
    \bottomrule
  \end{tabular}
  \label{tab:motivation}
\end{table*}

%However, as mentioned in \cref{sec:intro}, silent features naturally have a tendency to be suppressed by supervised learning, so we need to introduce self-supervised contrastive learning methods. 
We provide an intuitive 2-dimensional example to further elaborate on the effect of increasing the relative weight of silent features on the OOD generalization performance. As shown in~\cref{fig:instance-level}, the self-supervised contrastive pre-trained model with richer silent features increases the weight of shape in distinguishing between images of the same class (e.g., St. Bernard dogs of diﬀerent body shapes and poses), and gains higher classification performance on the distribution of the ``sketch" target domain where shape feature is more discriminative. The supervised learning pre-trained model, as shown in~\cref{fig:category-level}, achieves robust classification between dog and elephant in the ``photo" domain via abundant texture information since no further intra-class contrast is required.
Of course, the improved OOD generalization performance comes at the expense of a part of the i.i.d. performance, as~\cref{fig:simple_example} shows that the distance in new feature space between dogs and elephants in the ``photo" domain becomes lower compared with before. Therefore, for future work, we need to balance the relative weight of silent and dominant features, depending on the scenario and potential degree of distribution shift.

\begin{figure*}
  \centering
  \begin{subfigure}{0.495\linewidth}
    \includegraphics[width=1.0\linewidth]{./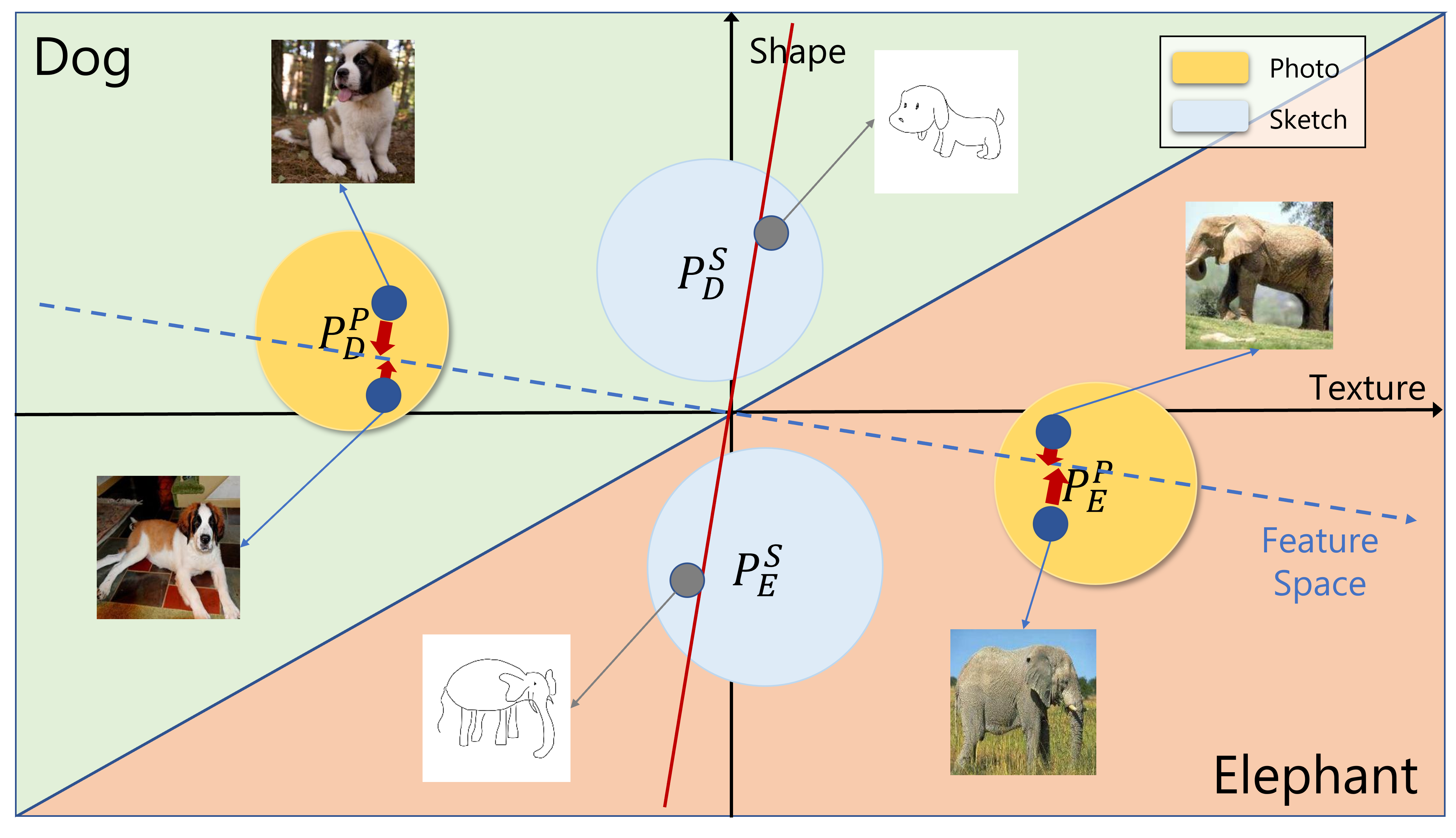}
    \caption{Supervised pre-trained model}
    \label{fig:category-level}
  \end{subfigure}
  \hfill
  \begin{subfigure}{0.495\linewidth}
    \includegraphics[width=1.0\linewidth]{./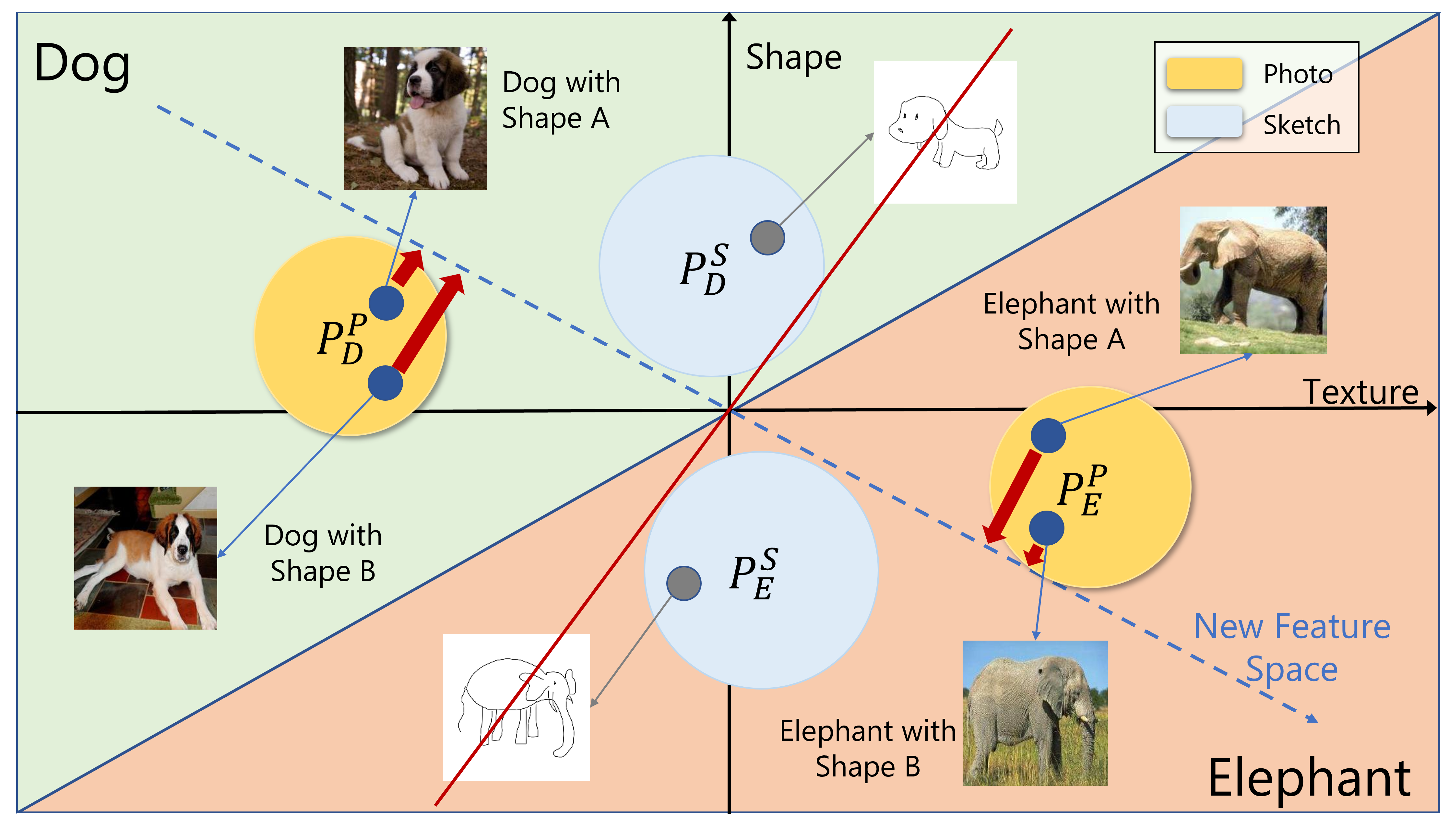}
    \caption{Self-supervised contrastive pre-trained model}
    \label{fig:instance-level}
  \end{subfigure}
  \caption{Example of the correlation between OOD generalization performance and the relative weight of silent features, images are from the PACS dataset~\cite{Li_2017_ICCV_pacs}. The source domain is the ``photo" domain in yellow, while the target domain is the ``sketch" domain in gray-blue, which distinguishes well in the texture and shape dimensions respectively. The diagonal line represents the ideal decision boundary for the dog and the elephant, the blue dashed line is the feature space given by mixing the two feature dimensions, and the red line is the corresponding empirical decision boundary. It is evident that the decision boundary fitted by the Self-supervised contrastive learning pre-trained model on the ``photo" domain is closer to the ideal situation and generalizes better on the inaccessible ``sketch" domain.}
  \label{fig:simple_example}
\end{figure*}

\subsection{Silent Feature Preservation}

In this section, we will go into depth about our proposed simple yet effective method STEP for preserving silent features and improving the stability of generalization performance, which consists of two components on top of the pre-trained models with silent features: the LP-FT~\cite{kumar2022finetuning} strategy, and the SWAD~\cite{NEURIPS2021_swad} technique. Note that our method is orthogonal to previous DG algorithms, which means STEP can be a plug-in module when one wants to exploit the silent features extracted by self-supervised contrastive learning. We will demonstrate the complementarity of STEP with existing DG methods in~\cref{subsec:ablation}.

\subsubsection{Linear Probing then fine-tuning (LP-FT)}

% {\color{red}We first introduce the base component of our approach.}
Based on our previous analysis, supervised learning has a considerable suppression effect on the silent features related to intra-class discrimination. Nonetheless, the majority of the existing DG methods employ supervised fine-tuning on the downstream DG datasets~\cite{gulrajani2021in,Ye_2022_CVPR_oodbench}, which will inevitably result in the re-suppression of silent features extracted by the self-supervised contrastive learning algorithms with intra-class negative pair comparison. To address this issue, we introduce a two-stage learning strategy of linear probing followed by overall fine-tuning.

In the first stage, we freeze the parameters of the pre-trained backbones and train the linear classification head only. This stage will directly inherit the pre-trained features, allowing the model to initially adapt to the downstream datasets and optimize the loss to compress the subsequent parameter search space. In the second stage, we fine-tune the full parameters of the model so that it can better transfer to the specific classification task. With the two-stage learning strategy, we minimize the impact of the supervised fine-tuning process on the silent features and preserve more intra-class information.

Additionally, to further support our theoretical analysis in~\cref{sec:theory}, we design an experiment, that trains the backbones with only linear probing on standard DG datasets, to show the superiority of silent features' generalization potential on target data with significant distribution shifts. The detailed results are shown in~\cref{appendix:supp_experiment}.

\subsubsection{Stochastic Weight Averaging Densely (SWAD)}

In addition to the feature suppression phenomenon mentioned in~\cref{sec:intro}, we also observed significant fluctuations in the test error after introducing the self-supervised contrastive learning pre-trained model during the actual training process. This noticeably affects the stability of the model's generalization performance on the target domain.
% As we stated in~\cref{sec:intro}, the self-supervised contrastive learning pre-trained backbones can not perform well when applied directly to the DG scenario, and the test error fluctuated noticeably during training
We conjecture that this is induced by the weak correlation between silent features and the given i.i.d. distribution on the classification task, which leads to a more complex solution space. As a result, incorporating more diverse silent features prevents the model from readily converging to a solution with better generalization performance.

SWAD is an ensemble learning method, that incorporates an overfit-aware mechanism and averaging the model weights over the training iterations, to help the model converge to flatter minima. As flat minima increase the robustness of the model~\cite{hochreiter1997flat,keskar2017flat2} and have a smaller generalization gap under the DG scenario~\cite{NEURIPS2021_swad}, we adopt the SWAD technique in the second stage (fine-tuning) to improve the convergence stability and generalization performance of the pre-trained backbones with self-supervised contrastive learning.

\section{Experiment}

\subsection{Experimental Settings}

\textbf{Datasets.} We validate our method on five standard DG datasets for object recognition, and the details of each dataset are provided below: (1) \textbf{VLCS}~\cite{Fang_2013_ICCV_vlcs}: consists of four domains (\textit{Caltech101, PASCAL VOC, LabelMe, SUN09}) that were captured with various cameras and viewpoints at different times and locations. It contains 5 categories and 10,729 images in total. We adopt the data partition of~\cite{Carlucci_2019_CVPR_jigsaw}. (2) \textbf{PACS}~\cite{Li_2017_ICCV_pacs}: consists of 4 domains of different styles (\textit{Art Painting, Cartoon, Photo, Sketch}) with larger distribution shifts. Each domain contains 7 categories with a total of 9,991 images. We use the original train-val split from~\cite{Li_2017_ICCV_pacs}. (3) \textbf{Office-Home}~\cite{Venkateswara_2017_officehome}: contains 15,500 images in 65 categories from 4 domains (\textit{Art, Clipart, Product, Real-World}) with various styles and backgrounds. Since Office-Home and ImageNet distributions are similar and have relatively smaller diversity shifts~\cite{Ye_2022_CVPR_oodbench}, we use this dataset to evaluate the performance of silent features on the test domain similar to the training distributions. (4) \textbf{Terra Incognita}~\cite{Beery_2018_ECCV_terra}: comprises 24,788 images in 10 categories of 4 domains (\textit{L100, L38, L43, L46}) from camera traps placed in different locations, with domain shifts primarily brought by environment and camera parameters. (5) \textbf{DomainNet}~\cite{Peng_2019_ICCV_domainnet}: a large-scale DG dataset with 586,575 images of 345 categories from 6 domains in various styles (\textit{clipart, infograph, painting, quickdraw, real, sketch}). We follow the original data split provided in~\cite{Peng_2019_ICCV_domainnet}.

\textbf{Baselines.} We compared STEP with 16 baselines: ERM~\cite{vapnik1999nature}, IRM~\cite{arjovsky2019invariant}, GroupDRO~\cite{Sagawa*2020Distributionally_dro}, Mixup~\cite{yan2020improve_mixup_domainbed}, MLDG~\cite{Li_Yang_Song_Hospedales_2018_mldg}, CORAL~\cite{sun2016deep}, MMD~\cite{Li_2018_CVPR_mmdaae}, DANN~\cite{JMLR:v17:15-239_dann}, CDANN~\cite{li2018ciddg}, MTL~\cite{blanchard2021domain_mtl}, SagNet~\cite{Nam_2021_CVPR_sagnet}, ARM~\cite{NEURIPS2021_c705112d_arm}, VREx~\cite{pmlr-v139-krueger21a_vrex}, RSC~\cite{rsc}, SelfReg~\cite{Kim_2021_ICCV_selfreg}, SWAD~\cite{NEURIPS2021_swad}. 

Most of the baselines are from the reported results of the DomainBed~\cite{gulrajani2021in}, except for SWAD (also built on DomainBed).

\iffalse
\begin{table*}
  \caption{DG accuracies of learning frameworks with different contrast granularity as pre-trained backbones on five DG benchmarks using linear probing only. The \textbf{best results} are marked in bold.}
  \centering
  \begin{tabular}{c|ccccc}
    \toprule
    Methods & VLCS & PACS & Office-Home & TerraInc & DomainNet \\
    \midrule
    Supervised + LP & 76.7 $\pm$ 0.4 & 68.2 $\pm$ 0.3 & \textbf{67.7 $\bm{\pm}$ 0.2} & 37.7 $\pm$ 0.5 & 32.5 $\pm$ 0.0\\
    SwAV + LP & \textbf{77.3 $\bm{\pm}$ 0.0} & \textbf{69.3 $\bm{\pm}$ 0.4} & 63.0 $\pm$ 0.1 & \textbf{38.4 $\bm{\pm}$ 0.4} & \textbf{33.0 $\bm{\pm}$ 0.0}\\
    MoCo v2 + LP & 76.5 $\pm$ 0.1 & 65.9 $\pm$ 0.7 & 59.2 $\pm$ 0.1 & 36.8 $\pm$ 0.5 & 31.7 $\pm$ 0.0\\
    \bottomrule
  \end{tabular}
  \label{tab:linear-probing}
\end{table*}
\fi

\begin{table*}
  \caption{Comparison with DG algorithms and STEP on five DG benchmarks. We highlight the \textbf{best} and \underline{second-best} results in bold and underlined respectively. Note that ERM (our runs) and SWAD (our runs) are reproduced results under our hyperparameter search protocol, baselines from the reports in DomainBed~\cite{gulrajani2021in} are denoted with $\dagger$, and other results denoted with $\ast$ are from the original literature. We selected the results based on the \textbf{oracle} model selection criterion in DomainBed to ensure a fair comparison. STEP-S refers to SwAV + STEP.}
  \centering
  \begin{tabular}{l|ccccc|c|c}
    \toprule
    Methods & VLCS & PACS & Office-Home & TerraInc & DomainNet & Avg. & Avg. w/o O-H\\
    \midrule
    ERM$^{\dagger}$~\cite{vapnik1999nature} & 77.6$\pm$0.3 & 86.7$\pm$0.3 & 66.4$\pm$0.5 & 53.0$\pm$0.3 & 41.3$\pm$0.1 & 65.0 & 64.7 \\
    ERM(reproduced) & 77.9$\pm$0.4 & 84.6$\pm$0.5 & 66.6$\pm$0.7 & 48.6$\pm$3.0 & 42.3$\pm$0.2 & 64.0 & 63.4 \\
    IRM$^{\dagger}$~\cite{arjovsky2019invariant} & 76.9$\pm$0.6 & 84.5$\pm$1.1 & 63.0$\pm$2.7 & 50.5$\pm$0.7 & 28.0$\pm$5.1 & 60.6 & 60.0 \\
    GroupDRO$^{\dagger}$~\cite{Sagawa*2020Distributionally_dro} & 77.4$\pm$0.5 & 87.1$\pm$0.1 & 66.2$\pm$0.6 & 52.4$\pm$0.1 & 33.4$\pm$0.3 & 63.3 & 62.6 \\
    Mixup$^{\dagger}$~\cite{domainmixup} &78.1$\pm$0.3 & 86.8$\pm$0.3 & 68.0$\pm$0.2 & \underline{54.4$\pm$0.3} & 39.6$\pm$0.1 & 65.4 & 64.7 \\
    MLDG$^{\dagger}$~\cite{Li_Yang_Song_Hospedales_2018_mldg}  &77.5$\pm$0.1 & 86.8$\pm$0.4 & 66.6$\pm$0.3 & 52.0$\pm$0.1 & 41.6$\pm$0.1 & 64.9 & 64.5 \\
    CORAL$^{\dagger}$~\cite{sun2016deep} & 77.7$\pm$0.2 & 87.1$\pm$0.5 & 68.4$\pm$0.2 & 52.8$\pm$0.2 & 41.8$\pm$0.1 & 65.6 & 64.9 \\
    MMD$^{\dagger}$~\cite{Li_2018_CVPR_mmdaae} & 77.9$\pm$0.1 & 87.2$\pm$0.1 & 66.2$\pm$0.3 & 52.0$\pm$0.4 & 23.5$\pm$9.4 & 61.4 & 60.2 \\
    DANN$^{\dagger}$~\cite{JMLR:v17:15-239_dann} & \underline{79.7$\pm$0.5} & 85.2$\pm$0.2 & 65.3$\pm$0.8 & 50.6$\pm$0.4 & 38.3$\pm$0.1 & 63.8 & 63.5 \\
    CDANN$^{\dagger}$~\cite{li2018ciddg} & \textbf{79.9\bm{$\pm$}0.2} & 85.8$\pm$0.8 & 65.3$\pm$0.5 & 50.8$\pm$0.6 & 38.5$\pm$0.2 & 64.1 & 63.8 \\
    MTL$^{\dagger}$~\cite{blanchard2021domain_mtl} & 77.7$\pm$0.5 & 86.7$\pm$0.2 & 66.5$\pm$0.4 & 52.2$\pm$0.4 & 40.8$\pm$0.1 & 64.8 & 64.4 \\
    SagNet$^{\dagger}$~\cite{Nam_2021_CVPR_sagnet} & 77.6$\pm$0.1 & 86.4$\pm$0.4 & 67.5$\pm$0.2 & 52.5$\pm$0.4 & 40.8$\pm$0.2 & 65.0 & 64.3 \\
    ARM$^{\dagger}$~\cite{NEURIPS2021_c705112d_arm} & 77.8$\pm$0.3 & 85.8$\pm$ 0.2& 64.8$\pm$0.4 & 51.2$\pm$0.5 & 36.0$\pm$0.2 & 63.1 & 62.7 \\
    VREx$^{\dagger}$~\cite{pmlr-v139-krueger21a_vrex} & 78.1$\pm$0.2 & 87.2$\pm$0.6 & 65.7$\pm$0.3 & 51.4$\pm$0.5 & 30.1$\pm$3.7 & 62.5 & 61.7 \\
    RSC$^{\dagger}$~\cite{rsc}  & 77.8$\pm$0.6 & 86.2$\pm$0.5 & 66.5$\pm$0.6 & 52.1$\pm$0.2 & 38.9$\pm$0.6 & 64.3 & 63.8 \\
    SelfReg$^{\ast}$~\cite{Kim_2021_ICCV_selfreg} & 77.5$\pm$0.0 & 86.5$\pm$0.3 & 69.4$\pm$0.2 & 51.0$\pm$0.4 & 44.6$\pm$0.1 & 65.8 & 64.9 \\ 
    SWAD$^{\ast}$~\cite{NEURIPS2021_swad} & 79.1$\pm$0.1 & \underline{88.1$\pm$0.1} & \textbf{70.6\bm{$\pm$}0.2} & 50.0$\pm$0.3 & \underline{46.5$\pm$0.1} & \underline{66.9} & \underline{65.9} \\
    SWAD(reproduced) & 78.1$\pm$0.4 & 87.1$\pm$0.1 & \textbf{70.6\bm{$\pm$}0.3} & 50.6$\pm$0.7 & 45.9$\pm$0.0 & 66.5 & 65.4 \\
    \midrule
    \iffalse
    STEP-M(ours) & 79.0$\pm$0.1 & \textbf{89.2\bm{$\pm$}0.0} & 67.7$\pm$0.2 & 53.6$\pm$0.7 & 46.1$\pm$0.1 & \underline{67.1} & \underline{67.0} \\
    \fi
    STEP-S(ours) & 78.6$\pm$0.4 & \textbf{88.7\bm{$\pm$}0.4} & \underline{68.6$\pm$0.1} & \textbf{55.6\bm{$\pm$}0.1} & \textbf{47.3\bm{$\pm$}0.0} & \textbf{67.8} & \textbf{67.6}\\
    \bottomrule
  \end{tabular}
  \label{tab:STEP}
\end{table*}

\textbf{Hyperparameter search.} To reduce the computational cost of random search over the hyperparameter distribution in DomainBed, we use a discrete hyperparameter search space and fix low-impact hyperparameters like weight decay, batch size, and dropout rate. For each hyperparameter, we select the model with the highest accuracy on the training-domain validation set. Then we conduct a grid search on the designed hyperparameter space to identify the best-performing hyperparameters on the test-domain validation set. The hyperparameter search space of all the experiments as well as the optimal hyperparameters are shown in the~\cref{appendix:experiment_settings}.

\textbf{Implementation details.} We generally follow the training and evaluation protocols of DomainBed and SWAD, including model selection on the training-domain validation set, Adam~\cite{DBLP:journals/corr/KingmaB14_adam} optimizer, and network architectures, but we perform hyperparameter search on the test-domain validation set. As for dataset partitioning, we choose to maintain the original data split if possible, otherwise, we randomly split the training and test sets in a ratio of 9:1 and kept the split fixed throughout the experiment.We increase the number of iterations for DomainNet to 15,000 for the main experiment while maintaining 5000 iterations for the other datasets, the same as in SWAD. Furthermore, to better stabilize the training fluctuations of STEP, we set a lower tolerance rate r and evaluation frequency compared with SWAD. Due to the above differences and the reduced hyperparameter search space, we \textbf{reproduce ERM and SWAD under our experimental settings}. For a fair comparison, we ensure that in each experiment the total number of iterations for linear probing and fine-tuning of STEP is not more than that of SWAD. Meanwhile, we use ResNet-50~\cite{He_2016_CVPR_resnet} pre-trained on ImageNet~\cite{deng2009imagenet} based on SwAV~\cite{NEURIPS2020_70feb62b} as the initial weight for STEP to make sure that no other data are introduced. More implementation details are given in the~\cref{appendix:experiment_settings}.

\textbf{Evaluation metrics.} Each time, we choose a domain as the unseen target domain, and we select the model with the highest average target domain accuracy. We repeat the entire experiment three times with three \textbf{fixed random seeds} to ensure the reliability and reproducibility of the experimental results. Finally, we report the mean and standard deviation of the average out-of-domain accuracies across all target domains over three repetitions.

\iffalse
\subsection{Linear Probing Results}

In order to investigate the generalization superiority of silent features on test distributions with significant distribution shifts, we designed experiments with linear probing, updating the linear head without changing the initial features. The experimental results are reported in~\cref{tab:linear-probing}, and it is evident that SwAV with relatively small contrast granularity achieves the best results on all four benchmarks with larger distribution diversity, whereas supervised learning with the largest contrast granularity outperforms the other frameworks by a large margin of  on Office-Home with the lowest distribution diversity. As the contrast granularity is further reduced, we observe that the accuracies of MoCo v2 are lower than those of SwAV, demonstrating a \textbf{trade-off} between silent features and dominant features.
\fi

\subsection{Main Experiment Results}

\iffalse
\begin{table}
  \caption{Ablation studies on all components of STEP on the Terra Incognita dataset. SF refers to the pre-trained models with silent features, and we use MoCo v2 here.}
  \centering
  \begin{tabular}{l|ccc|c}
    \toprule
    Method & SF & LP-FT & SWAD & TerraInc \\
    \midrule
    ERM(our runs) & \textbf{-} & \textbf{-} & \textbf{-} & 48.6 $\pm$ 3.0 \\
    \midrule
    Model A & \checkmark & \textbf{-} & \textbf{-} & 48.1 $\pm$ 1.1 \\
    Model B & \textbf{-} & \checkmark & \textbf{-} & 49.1 $\pm$ 2.9 \\
    Model C & \checkmark & \checkmark & \textbf{-} & 50.9 $\pm$ 1.6 \\
    Model D & \textbf{-} & \textbf{-} & \checkmark & 50.6 $\pm$ 0.7 \\
    Model E & \checkmark & \textbf{-} & \checkmark & 52.1 $\pm$ 0.2 \\
    Model F & \textbf{-} & \checkmark & \checkmark & 52.7 $\pm$ 0.7 \\
    \midrule
    STEP-M & \checkmark & \checkmark & \checkmark & 53.6 $\pm$ 0.7 \\
    STEP-S & \checkmark & \checkmark & \checkmark & \textbf{55.6 \bm{$\pm$} 0.1} \\
    \bottomrule
  \end{tabular}
  \label{tab:ablation}
\end{table}
\fi

We conduct comprehensive trials on five DG benchmarks, the results of which are provided in~\cref{tab:STEP}, to confirm the superiority of preserving silent features on DG tasks. We can observe that STEP-S (SwAV+STEP) achieves the highest accuracies on three of the four benchmarks with larger distribution shifts, while also getting the second-best result on Office-Home which has a lower diversity shift~\cite{Ye_2022_CVPR_oodbench}.
\iffalse
It demonstrates that STEP not only performs well with data that has significant distribution shifts but also compensates for the performance loss of introducing silent features in scenarios with small distribution shifts (-4.7\% for linear probing and -2.0\% for STEP on Office-Home).
\fi
In terms of average accuracy across all benchmarks, STEP-S outperforms all baselines, improving by 3.8 and 1.3 pp above the ERM and SWAD reproduced under our settings, respectively. It should be emphasized that we use a \textbf{reduced search space} in our trials to save computational costs. STEP shows state-of-the-art performances on four DG benchmarks apart from Office-Home, with a \textbf{large margin} (\textbf{+4.2/+2.2 pp} v.s. reproduced ERM/SWAD) on average out-of-domain accuracy, while most of the baselines improve the performance of ERM by less than 1.0pp. Furthermore, STEP is \textbf{orthogonal} to most of the prior DG approaches, allowing its integration with other DG methods to obtain better generalization performance with richer silent features. We give the full results in~\cref{appendix:results}.

\begin{table*}
  \caption{Comprehensive ablation studies on all five benchmarks. Results denoted with $\dagger$ are from DomainBed (oracle).}
  \centering
  \begin{tabular}{l|ccccc|c}
    \toprule
    Variants & VLCS & PACS & Office-Home & TerraInc & DomainNet & Avg.($\Delta$)\\
    \midrule
    Supervised + ERM & 77.9 $\pm$ 0.4 & 84.6 $\pm$ 0.5 & 66.6 $\pm$ 0.7 & 48.6 $\pm$ 3.0 & 42.3 $\pm$ 0.2 & 64.0 \\
    \iffalse
    SwAV+ERM & 77.6 $\pm$ 0.7 & 84.7 $\pm$ 1.5 & 63.1 $\pm$ 0.3 & 46.1 $\pm$ 2.9 & 42.4 $\pm$ 0.0 & 62.8(-1.2) \\
    \fi
    \midrule
    Supervised + LP-FT & 77.7 $\pm$ 0.2 & 83.3 $\pm$ 0.9 & 67.7 $\pm$ 0.4 & 49.2 $\pm$ 2.0 & 42.6 $\pm$ 0.2 & 64.1(+0.1) \\
    SwAV + LP-FT & \textbf{79.1 $\bm{\pm}$ 0.5} & 86.0 $\pm$ 0.3 & 63.8 $\pm$ 0.1 & 52.4 $\pm$ 1.2 & 43.0 $\pm$ 0.1 & 64.9(+0.9) \\
    \midrule
    Supervised + SWAD & 78.1 $\pm$ 0.4 & 87.1 $\pm$ 0.1 & 70.6 $\pm$ 0.3 & 50.6 $\pm$ 0.7 & 45.9 $\pm$ 0.0 & 66.5(+2.5) \\
    SwAV + SWAD & 77.3 $\pm$ 0.4 & 88.0 $\pm$ 0.4 & 68.4 $\pm$ 0.1 & 52.0 $\pm$ 0.7 & 45.9 $\pm$ 0.1 & 66.3(+2.3) \\
    \midrule
    Supervised + STEP & 78.0 $\pm$ 0.3 & 86.9 $\pm$ 0.5 & \textbf{70.8 \bm{$\pm$} 0.1} & 51.2 $\pm$ 0.3 & 46.1 $\pm$ 0.0 & 66.6(+2.6) \\
    MoCo v2~\cite{chen2020improved_moco_v2} + STEP  & 79.0 $\pm$ 0.1 & 89.2 $\pm$ 0.0 & 67.7 $\pm$ 0.2 & 53.6 $\pm$ 0.7 & 46.1 $\pm$ 0.1 & 67.1(+3.1) \\
    Barlow Twins~\cite{pmlr-v139-barlow} + STEP & 77.5 $\pm$ 0.2 & 86.6 $\pm$ 0.3 & 65.7 $\pm$ 0.2 & 50.3 $\pm$ 0.5 & 45.5 $\pm$ 0.0 & 65.1(+1.1) \\
    \midrule
    CORAL$^{\dagger}$ & 77.7 $\pm$ 0.2 & 87.1 $\pm$ 0.5 & 68.4 $\pm$ 0.2 & 52.8 $\pm$ 0.2 & 41.8 $\pm$ 0.1 & 65.6(+1.6) \\
    CORAL + STEP-S & 78.7 $\pm$ 0.2 & \textbf{88.9 $\bm{\pm}$ 0.3} & \textbf{70.8 $\bm{\pm}$ 0.2} & \textbf{55.4 $\bm{\pm}$ 0.5} & \textbf{47.6 $\bm{\pm}$ 0.0} & \textbf{68.3(+4.3)} \\
    \bottomrule
  \end{tabular}
  \label{tab:ablation}
\end{table*}

\subsection{Ablation Study}
\label{subsec:ablation}
\iffalse
To verify the effectiveness of STEP leveraging silent features and the role of each component, we conduct a comprehensive ablation study, the results of which are provided in~\cref{tab:ablation}. Model A employs ResNet-50 with silent features pre-trained on ImageNet using MoCo v2 as the backbone. However, we can notice that its accuracy drops compared to the baseline ERM, which shows that it is not feasible to directly use self-supervised contrastive learning pre-trained models on the DG setting, highlighting the necessity of the proposed STEP method. As for the impact of the LP-FT module, model C with silent features has a positive effect of +2.8\%, whereas the supervised learning pre-trained model B only gains 0.5\% from the addition of LP-FT. This difference clearly illustrates the significance of preserving silent features. By comparing model A and E, we can also observe that the SWAD module significantly enhances the generalization performance of the model with silent features, demonstrating that the smoothing strategy can effectively reduce the instability caused by the incorporation of silent features. The comparison of model F (+4.1\% over ERM) and STEP-M (+5.5\% over model A) reveals the specificity of the STEP method for silent features. Finally, by contrasting the accuracy of STEP-M and STEP-S, we can infer the trade-off between OOD generalization performance and the relative weight of silent features induced by pre-training methods with different contrast granularity in particular scenarios.
\fi

\noindent \textbf{Preserving silent features plays a key role in boosting performance.} As shown in~\cref{tab:motivation} and~\cref{tab:ablation}, for SwAV, both the direct ERM and the weight-ensemble method SWAD yield lower outcomes than the supervised learning baselines.  Only the addition of LP-FT, which modifies the prior supervised fine-tuning procedure, improves the DG performance of SwAV (+2.1pp on average compared to SwAV + ERM). Furthermore, LP-FT does not enhance the performance of the supervised learning pre-trained model, which implies that the relative weight of silent features is lower, demonstrating the suppression of intra-class silent features by the supervised objective. We also conducted supplementary experiments on the Euclidean distance of the features before and after training, as well as the mutual information between raw inputs and features, and the experimental results support our conjectures. The thorough analysis of the supplementary experiments is presented in~\cref{appendix:supp_experiment}.

\noindent \textbf{Impact of different pre-training algorithms.} To get a full picture of the relationship between STEP's performance and the relative weights of the silent features of the pre-trained models, we tested the supervised model, MoCo v2~\cite{chen2020improved_moco_v2}, and Barlow Twins~\cite{pmlr-v139-barlow} trained on ImageNet in addition to SwAV. Table 3 shows that STEP does not improve the supervised pre-trained model compared to SWAD. On the other hand, Barlow Twins, which does not use negative pairs, even has an average decrease of 1.5 pp compared to the supervised model. Only SwAV and MoCo v2, which use intra-class negative sample contrasts, exhibit a clear improvement over SWAD, indicating that silent features with good generalization potential over unseen target domains came more from intra-class contrast information. Furthermore, MoCo v2's decreased effect, which treats all augmented views of different samples as negative pairs, displays a trade-off between the relative weights of silent features and model generalization performance. It shows that the exaggerated weight of silent features impairs generalization ability by aggravating feature redundancy.

\noindent \textbf{Silent features perform better with significant  distribution shifts.} As seen in~\cref{tab:motivation} and~\cref{tab:ablation}, the introduction of SwAV, a self-supervised pre-trained model containing more intra-class silent features, performs significantly weaker than the supervised learning pre-trained model on Office-Home, which has a lower distribution shift. The average performance improvement of STEP-S on datasets other than Office-Home is more significant. Compared to reproduced ERM and SWAD baselines improved by 4.2 and 2.2 pp, respectively. This is because when the diversity shift is insignificant, the dominant features on the training and test distributions are more similar, resulting in the poor discriminability of silent features on the target domain. It is worth mentioning that STEP mitigates this phenomenon, narrowing the gap with the supervised model by 1.5pp.

\noindent \textbf{Complementarity of STEP and existing DG methods.}~\cref{tab:ablation} shows the combination with CORAL further compensates for STEP's weakness on datasets with small diversity shifts, achieving \textbf{new state-of-the-art} performance.

\section{Conclustion}

\noindent \textbf{Limitations.} We do not provide an explicit definition for silent features, and STEP is not perfect for preserving silent features, the second stage of fine-tuning still corrupts the original pre-trained features to some extent. In addition, STEP might cause feature redundancy issues since it's unclear which part of the silent features generalizes well across unseen domains.

In this paper, we introduce a new perspective for improving generalization performance on the preservation of silent features in domain generalization. The key idea is that silent features with rich intra-class contrast information, which are suppressed during supervised fine-tuning, have superior generalization ability over data with significant distribution discrepancies. We theoretically formulate the feature suppression phenomenon and analyze the benefits of preserving silent features for improving the generalization ability to unseen distributions. Extensive experiments and ablations demonstrate the state-of-the-art performance of our proposed STEP method on DG tasks and elucidate the impact of different pre-trained features and the applicable scenarios. Considering that the current mainstream DG methods use only supervised pre-trained features, we hope our work can serve as an exploratory first step to improve DG performance via effectively preserving intra-class silent features of the pre-trained model and shed some light on the DG community.

{\small
\bibliographystyle{ieee_fullname}
\bibliography{egbib}

\begin{thebibliography}{10}\itemsep=-1pt

\bibitem{arjovsky2019invariant}
Martin Arjovsky, L{\'e}on Bottou, Ishaan Gulrajani, and David Lopez-Paz.
\newblock Invariant risk minimization.
\newblock {\em arXiv preprint arXiv:1907.02893}, 2019.

\bibitem{arpit2021ensemble}
Devansh Arpit, Huan Wang, Yingbo Zhou, and Caiming Xiong.
\newblock Ensemble of averages: Improving model selection and boosting
  performance in domain generalization.
\newblock In {\em NeurIPS}, 2022.

\bibitem{NEURIPS2018_647bba34_metareg}
Yogesh Balaji, Swami Sankaranarayanan, and Rama Chellappa.
\newblock Metareg: Towards domain generalization using meta-regularization.
\newblock In {\em NeurIPS}, 2018.

\bibitem{Beery_2018_ECCV_terra}
Sara Beery, Grant Van~Horn, and Pietro Perona.
\newblock Recognition in terra incognita.
\newblock In {\em ECCV}, 2018.

\bibitem{blanchard2021domain_mtl}
Gilles Blanchard, Aniket~Anand Deshmukh, {\"U}run Dogan, Gyemin Lee, and
  Clayton Scott.
\newblock Domain generalization by marginal transfer learning.
\newblock {\em Journal of Machine Learning Research}, 22(1):46--100, 2021.

\bibitem{NEURIPS2021_exploit_ds}
Manh-Ha Bui, Toan Tran, Anh Tran, and Dinh Phung.
\newblock Exploiting domain-specific features to enhance domain generalization.
\newblock In {\em NeurIPS}, pages 21189--21201, 2021.

\bibitem{Carlucci_2019_CVPR_jigsaw}
Fabio~M. Carlucci, Antonio D'Innocente, Silvia Bucci, Barbara Caputo, and
  Tatiana Tommasi.
\newblock Domain generalization by solving jigsaw puzzles.
\newblock In {\em CVPR}, 2019.

\bibitem{NEURIPS2020_70feb62b}
Mathilde Caron, Ishan Misra, Julien Mairal, Priya Goyal, Piotr Bojanowski, and
  Armand Joulin.
\newblock Unsupervised learning of visual features by contrasting cluster
  assignments.
\newblock In {\em NeurIPS}, pages 9912--9924, 2020.

\bibitem{NEURIPS2021_swad}
Junbum Cha, Sanghyuk Chun, Kyungjae Lee, Han-Cheol Cho, Seunghyun Park, Yunsung
  Lee, and Sungrae Park.
\newblock Swad: Domain generalization by seeking flat minima.
\newblock In {\em NeurIPS}, pages 22405--22418, 2021.

\bibitem{miro}
Junbum Cha, Kyungjae Lee, Sungrae Park, and Sanghyuk Chun.
\newblock Domain generalization by mutual-information regularization
  with pre-trained models.
\newblock In {\em ECCV}, pages 440--457, 2022.

\bibitem{specif_invar}
Prithvijit Chattopadhyay, Yogesh Balaji, and Judy Hoffman.
\newblock Learning to balance specificity and invariance for in and out of
  domain generalization.
\newblock In {\em ECCV}, pages 301--318, 2020.

\bibitem{pmlr-v119-chen20j}
Ting Chen, Simon Kornblith, Mohammad Norouzi, and Geoffrey Hinton.
\newblock A simple framework for contrastive learning of visual
  representations.
\newblock In {\em ICML}, pages 1597--1607, 2020.

\bibitem{chen2020improved_moco_v2}
Xinlei Chen, Haoqi Fan, Ross Girshick, and Kaiming He.
\newblock Improved baselines with momentum contrastive learning.
\newblock {\em arXiv preprint arXiv:2003.04297}, 2020.

\bibitem{Chen_2021_CVPR_simsiam}
Xinlei Chen and Kaiming He.
\newblock Exploring simple siamese representation learning.
\newblock In {\em CVPR}, pages 15750--15758, 2021.

\bibitem{chen_iterative_2021}
Yining Chen, Elan Rosenfeld, Mark Sellke, Tengyu Ma, and Andrej Risteski.
\newblock Iterative feature matching: {Toward} provable domain generalization
  with logarithmic environments.
\newblock In {\em NeurIPS}, 2022.

\bibitem{deng2009imagenet}
Jia Deng, Wei Dong, Richard Socher, Li-Jia Li, Kai Li, and Li Fei-Fei.
\newblock Imagenet: A large-scale hierarchical image database.
\newblock In {\em CVPR}, pages 248--255, 2009.

\bibitem{lowrank}
Zhengming Ding and Yun Fu.
\newblock Deep domain generalization with structured low-rank constraint.
\newblock {\em IEEE TIP}, pages 304--313, 2018.

\bibitem{Dubey_2021_CVPR_adapt}
Abhimanyu Dubey, Vignesh Ramanathan, Alex Pentland, and Dhruv Mahajan.
\newblock Adaptive methods for real-world domain generalization.
\newblock In {\em CVPR}, pages 14340--14349, 2021.

\bibitem{Fang_2013_ICCV_vlcs}
Chen Fang, Ye Xu, and Daniel~N. Rockmore.
\newblock Unbiased metric learning: On the utilization of multiple datasets and
  web images for softening bias.
\newblock In {\em ICCV}, 2013.

\bibitem{JMLR:v17:15-239_dann}
Yaroslav Ganin, Evgeniya Ustinova, Hana Ajakan, Pascal Germain, Hugo
  Larochelle, Fran{\c{c}}ois Laviolette, Mario March, and Victor Lempitsky.
\newblock Domain-adversarial training of neural networks.
\newblock {\em Journal of Machine Learning Research}, 17(59):1--35, 2016.

\bibitem{NEURIPS2020_f3ada80d_BYOL}
Jean-Bastien Grill, Florian Strub, Florent Altch\'{e}, Corentin Tallec, Pierre
  Richemond, Elena Buchatskaya, Carl Doersch, Bernardo Avila~Pires, Zhaohan
  Guo, Mohammad Gheshlaghi~Azar, Bilal Piot, koray kavukcuoglu, Remi Munos, and
  Michal Valko.
\newblock Bootstrap your own latent - a new approach to self-supervised
  learning.
\newblock In {\em NeurIPS}, pages 21271--21284, 2020.

\bibitem{gulrajani2021in}
Ishaan Gulrajani and David Lopez-Paz.
\newblock In search of lost domain generalization.
\newblock In {\em ICLR}, 2021.

\bibitem{NEURIPS2021_27debb43_graph}
Jeff~Z. HaoChen, Colin Wei, Adrien Gaidon, and Tengyu Ma.
\newblock Provable guarantees for self-supervised deep learning with spectral
  contrastive loss.
\newblock In {\em NeurIPS}, pages 5000--5011, 2021.

\bibitem{He_2020_CVPR}
Kaiming He, Haoqi Fan, Yuxin Wu, Saining Xie, and Ross Girshick.
\newblock Momentum contrast for unsupervised visual representation learning.
\newblock In {\em CVPR}, 2020.

\bibitem{He_2016_CVPR_resnet}
Kaiming He, Xiangyu Zhang, Shaoqing Ren, and Jian Sun.
\newblock Deep residual learning for image recognition.
\newblock In {\em CVPR}, 2016.

\bibitem{Hendrycks_2021_ICCV}
Dan Hendrycks, Steven Basart, Norman Mu, Saurav Kadavath, Frank Wang, Evan
  Dorundo, Rahul Desai, Tyler Zhu, Samyak Parajuli, Mike Guo, Dawn Song, Jacob
  Steinhardt, and Justin Gilmer.
\newblock The many faces of robustness: A critical analysis of
  out-of-distribution generalization.
\newblock In {\em ICCV}, pages 8340--8349, 2021.

\bibitem{NEURIPS2019_selfrobust}
Dan Hendrycks, Mantas Mazeika, Saurav Kadavath, and Dawn Song.
\newblock Using self-supervised learning can improve model robustness and
  uncertainty.
\newblock In {\em NeurIPS}, 2019.

\bibitem{hochreiter1997flat}
Sepp Hochreiter and J{\"u}rgen Schmidhuber.
\newblock Flat minima.
\newblock {\em Neural computation}, 9(1):1--42, 1997.

\bibitem{rsc}
Zeyi Huang, Haohan Wang, Eric~P. Xing, and Dong Huang.
\newblock Self-challenging improves cross-domain generalization.
\newblock In {\em ECCV}, pages 124--140, 2020.

\bibitem{NEURIPS2021_testtime_adjust}
Yusuke Iwasawa and Yutaka Matsuo.
\newblock Test-time classifier adjustment module for model-agnostic domain
  generalization.
\newblock In {\em NeurIPS}, pages 2427--2440, 2021.

\bibitem{keskar2017flat2}
Nitish~Shirish Keskar, Dheevatsa Mudigere, Jorge Nocedal, Mikhail Smelyanskiy,
  and Ping Tak~Peter Tang.
\newblock On large-batch training for deep learning: Generalization gap and
  sharp minima.
\newblock In {\em ICLR}, 2017.

\bibitem{Kim_2021_ICCV_selfreg}
Daehee Kim, Youngjun Yoo, Seunghyun Park, Jinkyu Kim, and Jaekoo Lee.
\newblock Selfreg: Self-supervised contrastive regularization for domain
  generalization.
\newblock In {\em ICCV}, pages 9619--9628, 2021.

\bibitem{DBLP:journals/corr/KingmaB14_adam}
Diederik~P. Kingma and Jimmy Ba.
\newblock Adam: A method for stochastic optimization.
\newblock In {\em ICLR}, 2015.

\bibitem{pmlr-v139-krueger21a_vrex}
David Krueger, Ethan Caballero, Joern-Henrik Jacobsen, Amy Zhang, Jonathan
  Binas, Dinghuai Zhang, Remi~Le Priol, and Aaron Courville.
\newblock Out-of-distribution generalization via risk extrapolation (rex).
\newblock In {\em ICML}, pages 5815--5826, 2021.

\bibitem{kumar2022finetuning}
Ananya Kumar, Aditi Raghunathan, Robbie~Matthew Jones, Tengyu Ma, and Percy
  Liang.
\newblock Fine-tuning can distort pretrained features and underperform
  out-of-distribution.
\newblock In {\em ICLR}, 2022.

\bibitem{Li_Yang_Song_Hospedales_2018_mldg}
Da Li, Yongxin Yang, Yi-Zhe Song, and Timothy Hospedales.
\newblock Learning to generalize: Meta-learning for domain generalization.
\newblock In {\em AAAI}, 2018.

\bibitem{Li_2017_ICCV_pacs}
Da Li, Yongxin Yang, Yi-Zhe Song, and Timothy~M. Hospedales.
\newblock Deeper, broader and artier domain generalization.
\newblock In {\em ICCV}, 2017.

\bibitem{Li_2018_CVPR_mmdaae}
Haoliang Li, Sinno~Jialin Pan, Shiqi Wang, and Alex~C. Kot.
\newblock Domain generalization with adversarial feature learning.
\newblock In {\em CVPR}, pages 5400--5409, 2018.

\bibitem{li2018domain}
Ya Li, Mingming Gong, Xinmei Tian, Tongliang Liu, and Dacheng Tao.
\newblock Domain generalization via conditional invariant representations.
\newblock In {\em AAAI}, volume~32, 2018.

\bibitem{li2018ciddg}
Ya Li, Xinmei Tian, Mingming Gong, Yajing Liu, Tongliang Liu, Kun Zhang, and
  Dacheng Tao.
\newblock Deep domain generalization via conditional invariant adversarial
  networks.
\newblock In {\em ECCV}, pages 624--639, 2018.

\bibitem{Lv_2022_CVPR_cirl}
Fangrui Lv, Jian Liang, Shuang Li, Bin Zang, Chi~Harold Liu, Ziteng Wang, and
  Di Liu.
\newblock Causality inspired representation learning for domain generalization.
\newblock In {\em CVPR}, pages 8046--8056, 2022.

\bibitem{pmlr-v139-mahajan21b}
Divyat Mahajan, Shruti Tople, and Amit Sharma.
\newblock Domain generalization using causal matching.
\newblock In {\em ICML}, pages 7313--7324, 2021.

\bibitem{Mansilla_2021_ICCV_gradient}
Lucas Mansilla, Rodrigo Echeveste, Diego~H. Milone, and Enzo Ferrante.
\newblock Domain generalization via gradient surgery.
\newblock In {\em ICCV}, pages 6630--6638, 2021.

\bibitem{mohri2018foundations}
Mehryar Mohri, Afshin Rostamizadeh, and Ameet Talwalkar.
\newblock {\em Foundations of machine learning}.
\newblock MIT press, 2018.

\bibitem{pmlr-v28-muandet13}
Krikamol Muandet, David Balduzzi, and Bernhard Schölkopf.
\newblock Domain generalization via invariant feature representation.
\newblock In {\em ICML}, pages 10--18, 2013.

\bibitem{Nam_2021_CVPR_sagnet}
Hyeonseob Nam, HyunJae Lee, Jongchan Park, Wonjun Yoon, and Donggeun Yoo.
\newblock Reducing domain gap by reducing style bias.
\newblock In {\em CVPR}, pages 8690--8699, 2021.

\bibitem{Peng_2019_ICCV_domainnet}
Xingchao Peng, Qinxun Bai, Xide Xia, Zijun Huang, Kate Saenko, and Bo Wang.
\newblock Moment matching for multi-source domain adaptation.
\newblock In {\em ICCV}, 2019.

\bibitem{pmlr-v119-raghunathan20a_tradeoff1}
Aditi Raghunathan, Sang~Michael Xie, Fanny Yang, John Duchi, and Percy Liang.
\newblock Understanding and mitigating the tradeoff between robustness and
  accuracy.
\newblock In {\em ICML}, pages 7909--7919, 2020.

\bibitem{rosenfeld_risks_2021}
Elan Rosenfeld, Pradeep Ravikumar, and Andrej Risteski.
\newblock The risks of invariant risk minimization.
\newblock In {\em {ICLR}}, 2021.

\bibitem{ILSVRC15}
Olga Russakovsky, Jia Deng, Hao Su, Jonathan Krause, Sanjeev Satheesh, Sean Ma,
  Zhiheng Huang, Andrej Karpathy, Aditya Khosla, Michael Bernstein,
  Alexander~C. Berg, and Li Fei-Fei.
\newblock {ImageNet Large Scale Visual Recognition Challenge}.
\newblock {\em IJCV}, 115(3):211--252, 2015.

\bibitem{Sagawa*2020Distributionally_dro}
Shiori Sagawa*, Pang~Wei Koh*, Tatsunori~B. Hashimoto, and Percy Liang.
\newblock Distributionally robust neural networks.
\newblock In {\em ICLR}, 2020.

\bibitem{sun2016deep}
Baochen Sun and Kate Saenko.
\newblock Deep coral: Correlation alignment for deep domain adaptation.
\newblock In {\em ECCV}, pages 443--450, 2016.

\bibitem{vapnik1999nature}
Vladimir Vapnik.
\newblock {\em The nature of statistical learning theory}.
\newblock Springer science \& business media, 1999.

\bibitem{Venkateswara_2017_officehome}
Hemanth Venkateswara, Jose Eusebio, Shayok Chakraborty, and Sethuraman
  Panchanathan.
\newblock Deep hashing network for unsupervised domain adaptation.
\newblock In {\em CVPR}, 2017.

\bibitem{wang_provable_2022}
Haoxiang Wang, Haozhe Si, Bo Li, and Han Zhao.
\newblock Provable domain generalization via invariant-feature subspace
  recovery.
\newblock In {\em {ICML}}, pages 23018--23033, 2022.

\bibitem{Wang_2022_CVPR_cadapt}
Qin Wang, Olga Fink, Luc Van~Gool, and Dengxin Dai.
\newblock Continual test-time domain adaptation.
\newblock In {\em CVPR}, pages 7201--7211, 2022.

\bibitem{pmlr-v119-wang20k_alignment_uniformity}
Tongzhou Wang and Phillip Isola.
\newblock Understanding contrastive representation learning through alignment
  and uniformity on the hypersphere.
\newblock In {\em ICML}, pages 9929--9939, 2020.

\bibitem{domainmixup}
Yufei Wang, Haoliang Li, and Alex~C. Kot.
\newblock Heterogeneous domain generalization via domain mixup.
\newblock In {\em ICASSP}, pages 3622--3626, 2020.

\bibitem{wang2021revisit}
Yulin Wang, Zanlin Ni, Shiji Song, Le Yang, and Gao Huang.
\newblock Revisiting locally supervised learning: an alternative to end-to-end
  training.
\newblock In {\em ICLR}, 2021.

\bibitem{Wu_2021_ICCV}
Aming Wu, Rui Liu, Yahong Han, Linchao Zhu, and Yi Yang.
\newblock Vector-decomposed disentanglement for domain-invariant object
  detection.
\newblock In {\em ICCV}, pages 9342--9351, 2021.

\bibitem{Wu_2018_CVPR}
Zhirong Wu, Yuanjun Xiong, Stella~X. Yu, and Dahua Lin.
\newblock Unsupervised feature learning via non-parametric instance
  discrimination.
\newblock In {\em CVPR}, 2018.

\bibitem{xie2021innout_tradeoff2}
Sang~Michael Xie, Ananya Kumar, Robbie Jones, Fereshte Khani, Tengyu Ma, and
  Percy Liang.
\newblock In-n-out: Pre-training and self-training using auxiliary information
  for out-of-distribution robustness.
\newblock In {\em ICLR}, 2021.

\bibitem{Xu_2021_CVPR_fourier}
Qinwei Xu, Ruipeng Zhang, Ya Zhang, Yanfeng Wang, and Qi Tian.
\newblock A fourier-based framework for domain generalization.
\newblock In {\em CVPR}, pages 14383--14392, 2021.

\bibitem{yan2020improve_mixup_domainbed}
Shen Yan, Huan Song, Nanxiang Li, Lincan Zou, and Liu Ren.
\newblock Improve unsupervised domain adaptation with mixup training.
\newblock {\em arXiv preprint arXiv:2001.00677}, 2020.

\bibitem{Yao_2022_CVPR_pcl}
Xufeng Yao, Yang Bai, Xinyun Zhang, Yuechen Zhang, Qi Sun, Ran Chen, Ruiyu Li,
  and Bei Yu.
\newblock Pcl: Proxy-based contrastive learning for domain generalization.
\newblock In {\em CVPR}, pages 7097--7107, 2022.

\bibitem{Ye_2022_CVPR_oodbench}
Nanyang Ye, Kaican Li, Haoyue Bai, Runpeng Yu, Lanqing Hong, Fengwei Zhou,
  Zhenguo Li, and Jun Zhu.
\newblock Ood-bench: Quantifying and understanding two dimensions of
  out-of-distribution generalization.
\newblock In {\em CVPR}, pages 7947--7958, 2022.

\bibitem{pmlr-v139-barlow}
Jure Zbontar, Li Jing, Ishan Misra, Yann LeCun, and Stephane Deny.
\newblock Barlow twins: Self-supervised learning via redundancy reduction.
\newblock In {\em ICML}, pages 12310--12320, 2021.

\bibitem{zhang2018mixup}
Hongyi Zhang, Moustapha Cisse, Yann~N. Dauphin, and David Lopez-Paz.
\newblock mixup: Beyond empirical risk minimization.
\newblock In {\em ICLR}, 2018.

\bibitem{NEURIPS2021_c705112d_arm}
Marvin Zhang, Henrik Marklund, Nikita Dhawan, Abhishek Gupta, Sergey Levine,
  and Chelsea Finn.
\newblock Adaptive risk minimization: Learning to adapt to domain shift.
\newblock In {\em NeurIPS}, pages 23664--23678, 2021.

\bibitem{zhou2020ddaig}
Kaiyang Zhou, Yongxin Yang, Timothy Hospedales, and Tao Xiang.
\newblock Deep domain-adversarial image generation for domain generalisation.
\newblock In {\em AAAI}, pages 13025--13032, 2020.

\bibitem{zhou2020learning_L2A-OT}
Kaiyang Zhou, Yongxin Yang, Timothy Hospedales, and Tao Xiang.
\newblock Learning to generate novel domains for domain generalization.
\newblock In {\em ECCV}, pages 561--578, 2020.

\bibitem{9540778dael}
Kaiyang Zhou, Yongxin Yang, Yu Qiao, and Tao Xiang.
\newblock Domain adaptive ensemble learning.
\newblock {\em IEEE TIP}, 30:8008--8018, 2021.

\bibitem{zhou2021mixstyle}
Kaiyang Zhou, Yongxin Yang, Yu Qiao, and Tao Xiang.
\newblock Domain generalization with mixstyle.
\newblock In {\em ICLR}, 2021.

\end{thebibliography}
}

\clearpage
\appendix

\onecolumn

% \title{Appendix}

\section{Proofs and Discussion on Theoretical Results}
\label{appendix:proofs}

\subsection{Technical Lemmas}

We first introduce several technical lemmas which the main proof is based on. The first lemma characterizes the form of the Bayes classifier on top of any featurizer $\Phi_{(w_d,w_s)}$\footnote{For brevity, we omit the subscript $(w_d,w_s)$ in $\Phi_{(w_d,w_s)}$ when it is clear from context.} defined by Definition 1.
\begin{lemma}
Under the data generation model described in Section 4, given the featurizer $\Phi_{(w_d, w_s)}$, the classifier
\begin{equation}
\mathrm{sign}\left({\bm{\beta}^*}^\top \begin{bmatrix}\Phi(\rvx) \\ 1\end{bmatrix}\right) = \left\{\begin{array}{ll} 1,&\ \mathrm{if}\ \, {\bm{\beta}^*}^\top \begin{bmatrix}\Phi(\rvx) \\ 1\end{bmatrix} \ge 0 \\ -1, &\ \mathrm{otherwise} \end{array} \right. ,
\label{eq:cls_final}
\end{equation}
where
\begin{equation}
% \Phi^*(\rvx) = f^{-1}(\rvx) = (\rvz_d, \rvz_s)^\top,\quad
\bm{\beta}^* = \begin{bmatrix} \bm{\beta}^*_d\\ \bm{\beta}^*_s\\ \beta^*_0 \end{bmatrix} = \begin{bmatrix} {2 w_d\bm{\mu}_d}/{\sigma_d^2} \\ {2 w_s\bm{\mu}_s}/{\sigma_s^2} \\ \log\frac{\eta}{1-\eta} \end{bmatrix}
\label{eq:bayes}
\end{equation}
is the Bayes classifier in the training domain $\mathcal{D}_\mathrm{train}$.
\label{lemma:bayes}
\end{lemma}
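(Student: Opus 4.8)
The plan is to compute the Bayes-optimal classifier for the 0-1 loss directly as the maximum a posteriori (MAP) predictor and then verify that it coincides with the linear rule in \eqref{eq:cls_final}. Since $\ell$ is the 0-1 loss, the conditional risk at a point $\rvz=\Phi(\rvx)$ equals $1-P(\rvy=g(\rvz)\mid\Phi(\rvx)=\rvz)$, so it is minimized pointwise by $g^*(\rvz)=\arg\max_{y\in\{1,-1\}}P(\rvy=y\mid\Phi(\rvx)=\rvz)$, with ties broken in favor of $+1$ to match the ``$\ge 0$'' convention. Hence it suffices to show that the sign of the posterior log-odds $\log\frac{P(\rvy=1\mid\Phi(\rvx))}{P(\rvy=-1\mid\Phi(\rvx))}$ agrees with the sign of ${\bm{\beta}^*}^\top[\Phi(\rvx)^\top,\,1]^\top$, with the boundary case $=0$ mapped to $+1$ on both sides.

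First I would apply Bayes' rule to write the posterior odds as the prior odds $\frac{\eta}{1-\eta}$ times the likelihood ratio $\frac{p(\Phi(\rvx)\mid\rvy=1)}{p(\Phi(\rvx)\mid\rvy=-1)}$. By Definition 1, on $\cd_\rmtrain$ the conditional law of $\Phi(\rvx)=(\widetilde{\rvz}_d,\widetilde{\rvz}_s)$ given $\rvy$ is the product Gaussian $\mathcal{N}(\rvy w_d\bm{\mu}_d,\sigma_d^2\bm{I})\otimes\mathcal{N}(\rvy w_s\bm{\mu}_s,\sigma_s^2\bm{I})$ — the two feature blocks are conditionally independent given $\rvy$ by the data generation model of Section 4, so the joint density factorizes over the two blocks. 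Taking logarithms, the prior contributes the constant $\log\frac{\eta}{1-\eta}=\beta^*_0$, and each Gaussian block contributes a term of the form $-\frac{1}{2\sigma^2}\big(\lVert\widetilde{\rvz}-w\bm{\mu}\rVert_2^2-\lVert\widetilde{\rvz}+w\bm{\mu}\rVert_2^2\big)$.

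The one substantive step is the familiar linear-discriminant cancellation: expanding the difference of squared norms eliminates the quadratic terms $\lVert\widetilde{\rvz}\rVert_2^2$ and $w^2\lVert\bm{\mu}\rVert_2^2$ (because the two classes share the same covariance), leaving exactly $\frac{2w}{\sigma^2}\bm{\mu}^\top\widetilde{\rvz}$ per block. Summing the $\rvz_d$-block, the $\rvz_s$-block, and the prior term yields
\begin{equation*}
\log\frac{P(\rvy=1\mid\Phi(\rvx))}{P(\rvy=-1\mid\Phi(\rvx))} = \frac{2w_d}{\sigma_d^2}\bm{\mu}_d^\top\widetilde{\rvz}_d + \frac{2w_s}{\sigma_s^2}\bm{\mu}_s^\top\widetilde{\rvz}_s + \log\frac{\eta}{1-\eta},
\end{equation*}
which is precisely ${\bm{\beta}^*}^\top[\Phi(\rvx)^\top,\,1]^\top$ for $\bm{\beta}^*$ as in \eqref{eq:bayes}. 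Since the posterior odds exceed $1$ iff their logarithm is nonnegative, the MAP rule is exactly \eqref{eq:cls_final}, which proves the lemma.

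There is no genuine obstacle here: the argument is a routine Gaussian/LDA computation. The only points requiring a word of care are (i) the pointwise MAP characterization of the Bayes classifier for the 0-1 loss together with the tie-breaking convention, and (ii) the conditional independence of $\widetilde{\rvz}_d$ and $\widetilde{\rvz}_s$ given $\rvy$, which is immediate from the generation process in which $\rvz_d$ and $\rvz_s$ are drawn as independent Gaussians; everything else is expansion of squared norms. Note in particular that this lemma does not rely on the simplifying assumptions $\eta=\tfrac12$ or $\sigma_d^2=\sigma_s^2$ that are imposed later in Theorem 1.
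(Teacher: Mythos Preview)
Your proposal is correct and follows essentially the same approach as the paper: both derive the MAP rule by applying Bayes' rule, writing out the class-conditional Gaussian densities of $(\widetilde{\rvz}_d,\widetilde{\rvz}_s)$, and cancelling the quadratic terms to obtain the linear discriminant ${\bm{\beta}^*}^\top[\Phi(\rvx)^\top,\,1]^\top$. Your version is a touch more compact in working with the log-odds directly and is explicit about tie-breaking and the conditional independence of the two feature blocks, but there is no substantive difference.
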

\begin{proof}
Let $(\rvx,\rvy)\sim\mathcal{D}_\mathrm{train}$ and denote by $\widehat{\rvy}$ the model's prediction on $\rvx$. The Bayes classifier is then given by
\begin{equation}
\widehat{\rvy} = \left\{\begin{array}{ll} 1,& \text{if}\ \; p(\rvy=1\,|\,\widetilde{\rvz}_d,\widetilde{\rvz}_s)\ge p(\rvy=-1\,|\,\widetilde{\rvz}_d,\widetilde{\rvz}_s) \\ -1,& \text{otherwise} \end{array} \right. .
\label{eq:bayes_classifier}
\end{equation}
Using Bayes' rule we have
\begin{equation}
p(\rvy=c\,|\,\widetilde{\rvz}_d,\widetilde{\rvz}_s) = \frac{p(\widetilde{\rvz}_d,\widetilde{\rvz}_s\,|\,\rvy=c)p(\rvy=c)}{p(\widetilde{\rvz}_d,\widetilde{\rvz}_s)}\quad \forall c\in\{1,-1\}.
\label{eq:bayes}
\end{equation}
Combining Eqs.~\eqref{eq:bayes_classifier}~\eqref{eq:bayes} and $p(\rvy=1)=\eta$, $p(\rvy=-1)=1-\eta$ gives the Bayes classifier
\begin{equation}
\widehat{\rvy} = \left\{\begin{array}{ll} 1,& \text{if}\ \; p(\widetilde{\rvz}_d,\widetilde{\rvz}_s\,|\,\rvy=1)\cdot\eta \ge p(\widetilde{\rvz}_d,\widetilde{\rvz}_s\,|\,\rvy=-1)\cdot(1-\eta) \\ -1,& \text{otherwise} \end{array} \right.,
\label{eq:bayes_classifier_new}
\end{equation}
where
\begin{equation}
\begin{aligned}
p(\widetilde{\rvz}_d,\widetilde{\rvz}_s\,|\,\rvy=1) &= p(\widetilde{\rvz}_d\,|\,\rvy=1) p(\widetilde{\rvz}_s\,|\,\rvy=1) \\
&= \frac{1}{(2\pi)^{\frac{p_d+p_s}{2}}\sigma_d^{p_d}\sigma_s^{p_s}}\exp\left[-\frac{(\widetilde{\rvz}_d - w_d{\bm{\mu}}_d)^\top(\widetilde{\rvz}_d - w_d{\bm{\mu}}_d)}{2\sigma_d^2} - \frac{(\widetilde{\rvz}_s - w_s{\bm{\mu}}_s)^\top(\widetilde{\rvz}_s - w_s{\bm{\mu}}_s)}{2\sigma_s^2}\right]
\end{aligned}
\end{equation}
and
\begin{equation}
\begin{aligned}
p(\widetilde{\rvz}_d,\widetilde{\rvz}_s\,|\,\rvy=-1) &= p(\widetilde{\rvz}_d\,|\,\rvy=-1) p(\widetilde{\rvz}_s\,|\,\rvy=-1) \\
&= \frac{1}{(2\pi)^{\frac{p_d+p_s}{2}}\sigma_d^{p_d}\sigma_s^{p_s}}\exp\left[-\frac{(\widetilde{\rvz}_d + w_d{\bm{\mu}}_d)^\top(\widetilde{\rvz}_d + w_d{\bm{\mu}}_d)}{2\sigma_d^2} - \frac{(\widetilde{\rvz}_s + w_s{\bm{\mu}}_s)^\top(\widetilde{\rvz}_s + w_s{\bm{\mu}}_s)}{2\sigma_s^2}\right].
\end{aligned}
\end{equation}
Therefore,
\begin{align}
&p(\widetilde{\rvz}_d,\widetilde{\rvz}_s\,|\,\rvy=1)\cdot\eta \ge p(\widetilde{\rvz}_d,\widetilde{\rvz}_s\,|\,\rvy=-1)\cdot (1-\eta)\\
\Longleftrightarrow&\frac{\eta}{1-\eta}\cdot \exp\left[-\frac{(\widetilde{\rvz}_d - w_d{\bm{\mu}}_d)^\top(\widetilde{\rvz}_d - w_d{\bm{\mu}}_d)}{2\sigma_d^2} - \frac{(\widetilde{\rvz}_s - w_s{\bm{\mu}}_s)^\top(\widetilde{\rvz}_s - w_s{\bm{\mu}}_s)}{2\sigma_s^2}\right] \\ &\qquad \ge \exp\left[-\frac{(\widetilde{\rvz}_d + w_d{\bm{\mu}}_d)^\top(\widetilde{\rvz}_d + w_d{\bm{\mu}}_d)}{2\sigma_d^2} - \frac{(\widetilde{\rvz}_s + w_s{\bm{\mu}}_s)^\top(\widetilde{\rvz}_s + w_s{\bm{\mu}}_s)}{2\sigma_s^2}\right] \\
\Longleftrightarrow& -\frac{(\widetilde{\rvz}_d - w_d{\bm{\mu}}_d)^\top(\widetilde{\rvz}_d - w_d{\bm{\mu}}_d)}{2\sigma_d^2} - \frac{(\widetilde{\rvz}_s - w_s{\bm{\mu}}_s)^\top(\widetilde{\rvz}_s - w_s{\bm{\mu}}_s)}{2\sigma_s^2} + \log\frac{\eta}{1-\eta} \\
&\qquad \ge -\frac{(\widetilde{\rvz}_d + w_d{\bm{\mu}}_d)^\top(\widetilde{\rvz}_d + w_d{\bm{\mu}}_d)}{2\sigma_d^2} - \frac{(\widetilde{\rvz}_s + w_s{\bm{\mu}}_s)^\top(\widetilde{\rvz}_s + w_s{\bm{\mu}}_s)}{2\sigma_s^2}\\
\Longleftrightarrow& \frac{2w_d\bm{\mu}_d^\top\widetilde{\rvz}_d}{\sigma_d^2} + \frac{2w_s\bm{\mu}_s^\top\widetilde{\rvz}_s}{\sigma_s^2} + \log\frac{\eta}{1-\eta} \ge 0.\label{eq:cls}
\end{align}
Defining $\bm{\beta}^* = \begin{bmatrix} \bm{\beta}^*_d\\ \bm{\beta}^*_s\\ \beta^*_0 \end{bmatrix} = \begin{bmatrix} {2 w_d\bm{\mu}_d}/{\sigma_d^2} \\ {2 w_s\bm{\mu}_s}/{\sigma_s^2} \\ \log\frac{\eta}{1-\eta} \end{bmatrix}$, Eq.~\eqref{eq:cls} amounts to ${\bm{\beta}^*}^\top\begin{bmatrix} \widetilde{\rvz}_d\\ \widetilde{\rvz}_s \\ 1 \end{bmatrix} = {\bm{\beta}^*}^\top \begin{bmatrix}\Phi(\rvx) \\ 1\end{bmatrix} \ge 0$. Thus, the Bayes classifier~\eqref{eq:bayes_classifier_new} is equivalent to the classifier defined by Eq.~\eqref{eq:cls_final}.
\end{proof}

In other words, the Bayes classifier of our data generation model is contained in the space containing all \emph{linear} classifiers on top of Gaussian features. In what follows, for any linear classifier $\bm{\beta} = ({\bm{\beta}_d, \bm{\beta}_s, \beta_0})^\top$, we denote the binary predictor $\mathrm{sign}\left({\bm{\beta}}^\top \begin{bmatrix}\Phi(\rvx) \\ 1\end{bmatrix}\right)$ by $\bm{\beta}\circ\Phi$. Our next lemma gives the expected train and test risks of this predictor in our DG setting.

\begin{lemma}
Given the featurizer $\Phi_{(w_d, w_s)}$ and the classifier $\bm{\beta} = ({\bm{\beta}_d, \bm{\beta}_s, \beta_0})^\top$, the predictor $\bm{\beta}\circ\Phi$ yields the following expected training and test risks:
\begin{equation}
\begin{aligned}
R_\mathrm{train}(\bm{\beta}\circ\Phi) &= \eta\cdot F\left(-\frac{w_d\bm{\beta}_d^\top \bm{\mu}_d + w_s \bm{\beta}_s^\top \bm{\mu}_s + \beta_0}{\sqrt{\sigma_d^2\lVert \bm{\beta}_d \rVert_2^2 + \sigma_s^2 \lVert \bm{\beta}_s \rVert_2^2}}\right) + (1-\eta)\cdot F\left(-\frac{w_d\bm{\beta}_d^\top \bm{\mu}_d + w_s\bm{\beta}_s^\top \bm{\mu}_s - \beta_0}{\sqrt{\sigma_d^2\lVert \bm{\beta}_d \rVert_2^2 + \sigma_s^2 \lVert \bm{\beta}_s \rVert_2^2}}\right), \\
R_\mathrm{test}(\bm{\beta}\circ\Phi) &= \eta\cdot F\left(-\frac{w_d\bm{\beta}_d^\top \bm{\mu}_d + w_s \gamma\bm{\beta}_s^\top \bm{\mu}_s + \beta_0}{\sqrt{\sigma_d^2\lVert \bm{\beta}_d \rVert_2^2 + \sigma_s^2 \lVert \bm{\beta}_s \rVert_2^2}}\right) + (1-\eta)\cdot F\left(-\frac{w_d\bm{\beta}_d^\top \bm{\mu}_d + w_s\gamma\bm{\beta}_s^\top \bm{\mu}_s - \beta_0}{\sqrt{\sigma_d^2\lVert \bm{\beta}_d \rVert_2^2 + \sigma_s^2 \lVert \bm{\beta}_s \rVert_2^2}}\right),
\end{aligned}
\end{equation}
where $F$ is the CDF of a standard Gaussian $\mathcal{N}(0,1)$.
\label{lemma:error}
\end{lemma}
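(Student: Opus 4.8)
The plan is to compute the expected $0$-$1$ risk of the linear predictor $\bm{\beta}\circ\Phi$ directly, conditioning on the label $\rvy$ and exploiting the Gaussianity of the features produced by $\Phi_{(w_d,w_s)}$. First I would note that the predictor misclassifies $(\rvx,\rvy)$ exactly when $\rvy\cdot\left({\bm{\beta}_d}^\top\widetilde{\rvz}_d + {\bm{\beta}_s}^\top\widetilde{\rvz}_s + \beta_0\right) < 0$, so by the law of total expectation $R(\bm{\beta}\circ\Phi) = \eta\cdot\Pr[\,\cdot < 0 \mid \rvy=1\,] + (1-\eta)\cdot\Pr[\,\cdot > 0 \mid \rvy=-1\,]$, where the conditional probabilities are taken under the relevant domain. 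I would treat the training and test cases in parallel, since they differ only in that the mean of $\widetilde{\rvz}_s$ carries an extra factor $\gamma$ in the test domain.

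The key computation is that, conditioned on $\rvy=1$ in the training domain, $\widetilde{\rvz}_d\sim\mathcal{N}(w_d\bm{\mu}_d,\sigma_d^2\bm{I})$ and $\widetilde{\rvz}_s\sim\mathcal{N}(w_s\bm{\mu}_s,\sigma_s^2\bm{I})$ are independent, so the scalar $T \vcentcolon= {\bm{\beta}_d}^\top\widetilde{\rvz}_d + {\bm{\beta}_s}^\top\widetilde{\rvz}_s + \beta_0$ is univariate Gaussian with mean $w_d{\bm{\beta}_d}^\top\bm{\mu}_d + w_s{\bm{\beta}_s}^\top\bm{\mu}_s + \beta_0$ and variance $\sigma_d^2\lVert\bm{\beta}_d\rVert_2^2 + \sigma_s^2\lVert\bm{\beta}_s\rVert_2^2$. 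Then $\Pr[T<0\mid\rvy=1]$ is just the standard-Gaussian CDF $F$ evaluated at $-\,(\text{mean})/(\text{standard deviation})$, which yields the first term. Repeating with $\rvy=-1$ (which flips the sign of both means, hence $\Pr[T>0\mid\rvy=-1] = F\big(-(\text{$-$ sum of mean terms $+$ $\beta_0$ with sign flipped})/\sqrt{\cdot}\big)$, giving the $-\beta_0$ in the numerator) produces the second term. The test-domain formula follows identically once $w_s\bm{\mu}_s$ is replaced by $\gamma w_s\bm{\mu}_s$ in the conditional mean of $\widetilde{\rvz}_s$.

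There is essentially no hard obstacle here: the only things to be careful about are bookkeeping the signs when conditioning on $\rvy=-1$ (making sure the $\beta_0$ offset picks up a minus sign while the $\bm{\mu}_d,\bm{\mu}_s$ terms do not, because the label flips their means but not the intercept) and confirming the independence of $\widetilde{\rvz}_d$ and $\widetilde{\rvz}_s$ so that the variances add without a covariance cross-term. I would also remark that the formula is only meaningful when $\sigma_d^2\lVert\bm{\beta}_d\rVert_2^2 + \sigma_s^2\lVert\bm{\beta}_s\rVert_2^2 > 0$, i.e. $\bm{\beta}_d$ and $\bm{\beta}_s$ are not both zero, which holds for the Bayes classifier of Lemma~\ref{lemma:bayes} whenever $w_d\bm{\mu}_d$ or $w_s\bm{\mu}_s$ is nonzero. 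Substituting the explicit Bayes $\bm{\beta}^*$ from Lemma~\ref{lemma:bayes} into the test-risk expression, together with $\eta=\tfrac12$ and $\sigma_d^2=\sigma_s^2=\sigma^2$, and simplifying, will then recover the statement of Theorem~\ref{theo:error}; but that substitution belongs to the proof of the theorem rather than this lemma.
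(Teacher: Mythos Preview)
Your proposal is correct and follows essentially the same approach as the paper: condition on $\rvy\in\{1,-1\}$, use that the affine score $\bm{\beta}_d^\top\widetilde{\rvz}_d+\bm{\beta}_s^\top\widetilde{\rvz}_s+\beta_0$ is univariate Gaussian with the stated mean and variance, read off the misclassification probability as a standard-Gaussian CDF, and then repeat with the $\gamma$ factor for the test domain. Your added remarks on sign bookkeeping, the independence of $\widetilde{\rvz}_d$ and $\widetilde{\rvz}_s$, and the nondegeneracy condition $\sigma_d^2\lVert\bm{\beta}_d\rVert_2^2+\sigma_s^2\lVert\bm{\beta}_s\rVert_2^2>0$ are all sound and slightly more explicit than the paper's write-up, but there is no substantive difference in strategy.
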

\begin{proof}
In the training domain, we have
\begin{equation}
    \widetilde{\rvz}_d\sim\mathcal{N}(\rvy\cdot w_d\bm{\mu}_d,\sigma_d^2\bm{I}),\quad \widetilde{\rvz}_s\sim\mathcal{N}(\rvy\cdot w_s\bm{\mu}_s,\sigma^2_s\bm{I})
\end{equation}
Consider the examples with $\rvy=1$. The expected training risk is given by the probability of the quantity $\bm{\beta}_d^\top\widetilde{\rvz}_d + \bm{\beta}_s^\top\widetilde{\rvz}_s + \beta_0$ being negative. Note that 
\begin{equation}
w_d\bm{\beta}_d^\top\bm{\mu}_d + w_s\bm{\beta}_s^\top\bm{\mu}_s + \beta_0 \sim \mathcal{N}\left(w_d\bm{\beta}_d^\top\bm{\mu}_d + w_s\bm{\beta}_s^\top\bm{\mu}_s + \beta_0,\sigma_d^2\lVert\bm{\beta}_d\rVert_2^2 + \sigma_s^2\lVert\bm{\beta}_s\rVert_2^2\right)
\end{equation}
due to the property of multi-variate Gaussian, thus this probability is given by $F\left(-\frac{w_d\bm{\beta}_d^\top\bm{\mu}_d + w_s\bm{\beta}_s^\top\bm{\mu}_s + \beta_0}{\sqrt{\sigma_d^2\lVert\bm{\beta}_d\rVert_2^2 + \sigma_s^2\lVert\bm{\beta}_s\rVert_2^2}} \right)$.
Similarly, for the examples with $\rvy=-1$, the expected training risk is given by the probability of the quantity $\bm{\beta}_d^\top\widetilde{\rvz}_d + \bm{\beta}_s^\top\widetilde{\rvz}_s + \beta_0$ being positive. Note that
\begin{equation}
    w_d\bm{\beta}_d^\top\bm{\mu}_d + w_s\bm{\beta}_s^\top\bm{\mu}_s + \beta_0 \sim \mathcal{N}\left(-w_d\bm{\beta}_d^\top\bm{\mu}_d - w_s\bm{\beta}_s^\top\bm{\mu}_s + \beta_0,\sigma_d^2\lVert\bm{\beta}_d\rVert_2^2 + \sigma_s^2\lVert\bm{\beta}_s\rVert_2^2\right),
\end{equation}
thus this probability is given by $F\left(-\frac{w_d\bm{\beta}_d^\top\bm{\mu}_d + w_s\bm{\beta}_s^\top\bm{\mu}_s - \beta_0}{\sqrt{\sigma_d^2\lVert\bm{\beta}_d\rVert_2^2 + \sigma_s^2\lVert\bm{\beta}_s\rVert_2^2}} \right)$. Therefore, the expected training risk of the predictor $\bm{\beta}\circ\Phi$ is
\begin{equation}
    R_\mathrm{train}(\bm{\beta}\circ\Phi) = \eta\cdot F\left(-\frac{w_d\bm{\beta}_d^\top \bm{\mu}_d + w_s \bm{\beta}_s^\top \bm{\mu}_s + \beta_0}{\sqrt{\sigma_d^2\lVert \bm{\beta}_d \rVert_2^2 + \sigma_s^2 \lVert \bm{\beta}_s \rVert_2^2}}\right) + (1-\eta)\cdot F\left(-\frac{w_d\bm{\beta}_d^\top \bm{\mu}_d + w_s\bm{\beta}_s^\top \bm{\mu}_s - \beta_0}{\sqrt{\sigma_d^2\lVert \bm{\beta}_d \rVert_2^2 + \sigma_s^2 \lVert \bm{\beta}_s \rVert_2^2}}\right).
\end{equation}
The expected test risk of the predictor can be derived analogously using the fact that
\begin{equation}
    \widetilde{\rvz}_d\sim\mathcal{N}(\rvy\cdot w_d\bm{\mu}_d,\sigma_d^2\bm{I}),\quad \widetilde{\rvz}_s\sim\mathcal{N}(\rvy\cdot w_s\gamma\bm{\mu}_s,\sigma^2_s\bm{I})
\end{equation}
in the test domain.
\end{proof}

\subsection{Proof of Theorem~1}

We first state a generalized version of Theorem~1 in the main text and give its proof.
%For convenience, we first restate Theorem~1 in the main text and then give its proof.

\begin{theorem}[Expected test domain risk, generalized]
For any $w_d,w_s\in[0,1]$, the predictor $g^*\circ\Phi_{(w_d,w_s)}$, composed of the featurizer $\Phi_{(w_d,w_s)}$ and the training-domain Bayes classifier $g^*$ on top of $\Phi_{(w_d,w_s)}$, gives the expected train and test risks of
\begin{equation}
\begin{aligned}
R_\mathrm{train}(g^*\circ\Phi) &= \eta\cdot F\left(-\frac{\frac{w_d^2\lVert \bm{\mu}_d \rVert_2^2}{\sigma_d^2} + \frac{w_s^2\lVert\bm{\mu}_s\rVert_2^2}{\sigma_s^2} + \log \frac{\eta}{1-\eta}}{\sqrt{\frac{w_d^2\lVert \bm{\mu}_d \rVert_2^2}{\sigma_d^2} + \frac{w_s^2\lVert\bm{\mu}_s\rVert_2^2}{\sigma_s^2}}}\right) + (1-\eta)\cdot F\left(-\frac{\frac{w_d^2\lVert \bm{\mu}_d \rVert_2^2}{\sigma_d^2} + \frac{w_s^2\lVert\bm{\mu}_s\rVert_2^2}{\sigma_s^2} - \log \frac{\eta}{1-\eta}}{\sqrt{\frac{w_d^2\lVert \bm{\mu}_d \rVert_2^2}{\sigma_d^2} + \frac{w_s^2\lVert\bm{\mu}_s\rVert_2^2}{\sigma_s^2}}}\right), \\
R_\mathrm{test}(g^*\circ \Phi) &= \eta\cdot F\left(-\frac{\frac{w_d^2\lVert \bm{\mu}_d \rVert_2^2}{\sigma_d^2} + \frac{w_s^2\gamma\lVert\bm{\mu}_s\rVert_2^2}{\sigma_s^2} + \log \frac{\eta}{1-\eta}}{\sqrt{\frac{w_d^2\lVert \bm{\mu}_d \rVert_2^2}{\sigma_d^2} + \frac{w_s^2\lVert\bm{\mu}_s\rVert_2^2}{\sigma_s^2}}}\right) + (1-\eta)\cdot F\left(-\frac{\frac{w_d^2\lVert \bm{\mu}_d \rVert_2^2}{\sigma_d^2} + \frac{w_s^2\gamma\lVert\bm{\mu}_s\rVert_2^2}{\sigma_s^2} - \log \frac{\eta}{1-\eta}}{\sqrt{\frac{w_d^2\lVert \bm{\mu}_d \rVert_2^2}{\sigma_d^2} + \frac{w_s^2\lVert\bm{\mu}_s\rVert_2^2}{\sigma_s^2}}}\right),
\end{aligned}
\label{eq:errors}
\end{equation}
where $F$ is the CDF of a standard Gaussian $\mathcal{N}(0,1)$.
\label{theo:errors}
\end{theorem}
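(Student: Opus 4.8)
The plan is to derive Theorem~\ref{theo:errors} as an immediate consequence of the two technical lemmas already established, namely Lemma~\ref{lemma:bayes} and Lemma~\ref{lemma:error}. The key observation is that $g^*$ is precisely the Bayes classifier $\bm{\beta}^*\circ\Phi_{(w_d,w_s)}$ whose explicit form is given in Lemma~\ref{lemma:bayes}, so the test and train risks of $g^*\circ\Phi_{(w_d,w_s)}$ are obtained by substituting $\bm{\beta} = \bm{\beta}^*$ into the general risk expressions of Lemma~\ref{lemma:error}.

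First I would record the substitution values coming from Eq.~\eqref{eq:bayes}: $\bm{\beta}^*_d = 2 w_d\bm{\mu}_d/\sigma_d^2$, $\bm{\beta}^*_s = 2 w_s\bm{\mu}_s/\sigma_s^2$, and $\beta^*_0 = \log\frac{\eta}{1-\eta}$. Then I would compute the three scalar quantities that appear inside $F$ in Lemma~\ref{lemma:error}. The numerator term becomes
\begin{equation*}
w_d{\bm{\beta}^*_d}^\top\bm{\mu}_d + w_s{\bm{\beta}^*_s}^\top\bm{\mu}_s + \beta^*_0 = \frac{2 w_d^2\lVert\bm{\mu}_d\rVert_2^2}{\sigma_d^2} + \frac{2 w_s^2\lVert\bm{\mu}_s\rVert_2^2}{\sigma_s^2} + \log\frac{\eta}{1-\eta},
\end{equation*}
and similarly for the test domain one replaces the silent-feature contribution by its $\gamma$-scaled version $2 w_s^2\gamma\lVert\bm{\mu}_s\rVert_2^2/\sigma_s^2$. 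For the denominator I would compute
\begin{equation*}
\sigma_d^2\lVert\bm{\beta}^*_d\rVert_2^2 + \sigma_s^2\lVert\bm{\beta}^*_s\rVert_2^2 = \frac{4 w_d^2\lVert\bm{\mu}_d\rVert_2^2}{\sigma_d^2} + \frac{4 w_s^2\lVert\bm{\mu}_s\rVert_2^2}{\sigma_s^2},
\end{equation*}
so that its square root is $2\sqrt{w_d^2\lVert\bm{\mu}_d\rVert_2^2/\sigma_d^2 + w_s^2\lVert\bm{\mu}_s\rVert_2^2/\sigma_s^2}$. I would then divide numerator by denominator; the factor of $2$ in the first two summands of the numerator cancels the factor of $2$ in the denominator, leaving exactly the expressions displayed in Eq.~\eqref{eq:errors}, with the $\log\frac{\eta}{1-\eta}$ term surviving un-cancelled since it carries no factor of $2$.

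Finally, to recover the simplified Theorem~\ref{theo:error} of the main text, I would specialize to $\eta = \tfrac12$ (so $\log\frac{\eta}{1-\eta} = 0$ and the two weighted $F$-terms collapse into a single one with coefficient $1$) and $\sigma_d^2 = \sigma_s^2 = \sigma^2$ (so the $\sigma$'s factor out of numerator and denominator, leaving an overall $1/\sigma$). I do not anticipate a genuine obstacle here: the entire argument is a direct substitution plus bookkeeping of the cancellations between the factor-of-$2$ in $\bm{\beta}^*$ and the factor-of-$4$ inside the variance. The only point requiring a little care is to confirm that the Gaussian-projection variance computation in Lemma~\ref{lemma:error} is being invoked with the correct (suppressed) feature distributions $\widetilde{\rvz}_d,\widetilde{\rvz}_s$ from Definition~\ref{def:suppression}, which is exactly what Lemma~\ref{lemma:error} is stated for, so no additional work is needed.
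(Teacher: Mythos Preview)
Your plan is exactly the paper's proof: invoke Lemma~\ref{lemma:bayes} for the explicit Bayes classifier $\bm{\beta}^*$ and substitute it into the risk formula of Lemma~\ref{lemma:error}. One arithmetic caveat in your final step: dividing the numerator $2A + \log\tfrac{\eta}{1-\eta}$ (with $A = w_d^2\lVert\bm{\mu}_d\rVert_2^2/\sigma_d^2 + w_s^2\lVert\bm{\mu}_s\rVert_2^2/\sigma_s^2$) by the denominator $2\sqrt{A}$ yields $\bigl(A + \tfrac{1}{2}\log\tfrac{\eta}{1-\eta}\bigr)/\sqrt{A}$, so the log term does pick up a factor of $\tfrac{1}{2}$ rather than ``surviving un-cancelled'' as you write; this is a minor discrepancy with the displayed formula in Eq.~\eqref{eq:errors} (apparently a typo there, since the paper's one-line proof does not display the intermediate arithmetic) but it is immaterial for the main-text Theorem~\ref{theo:error} because $\eta=\tfrac{1}{2}$ kills the term anyway.
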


\begin{proof}
According to Lemma~\ref{lemma:bayes}, on top of the featurizer $\Phi_{(w_d,w_s)}$, the training-domain Bayes classifier $g^*$ is given by the linear classifier $\bm{\beta}^* = \begin{bmatrix} \bm{\beta}^*_d\\ \bm{\beta}^*_s\\ \beta^*_0 \end{bmatrix} = \begin{bmatrix} {2 w_d\bm{\mu}_d}/{\sigma_d^2} \\ {2 w_s\bm{\mu}_s}/{\sigma_s^2} \\ \log\frac{\eta}{1-\eta} \end{bmatrix}$. Replacing all $\bm{\beta}$ by $\bm{\beta}^*$ in Lemma~\ref{lemma:error} gives the desired result.
\end{proof}

\paragraph{Recovering Theorem 1 in the main text:} Incorporating the additional assumptions of $\eta=\frac{1}{2}$ and $\sigma^2_d = \sigma^2_s = \sigma^2$ into $R_\mathrm{train}(g^*\circ\Phi)$ and $R_\mathrm{test}(g^*\circ\Phi)$ in Eq.~\eqref{eq:errors} gives
\begin{equation}
R_\mathrm{train}\left(g^*\circ\Phi\right) = F\left(-\frac{1}{\sigma}\cdot\frac{{w}_d^2\lVert\bm{\mu}_d\rVert_2^2 + {w}_s^2\lVert\bm{\mu}_s\rVert_2^2}{\sqrt{ {w}_d^2\lVert\bm{\mu}_d\rVert_2^2 +{w}_s^2 \lVert \bm{\mu}_s \rVert_2^2}}\right)
\label{eq:train}
\end{equation}
and
\begin{equation}
R_\mathrm{test}\left(g^*\circ\Phi\right) = F\left(-\frac{1}{\sigma}\cdot\frac{{w}_d^2\lVert\bm{\mu}_d\rVert_2^2 + \gamma{w}_s^2\lVert\bm{\mu}_s\rVert_2^2}{\sqrt{ {w}_d^2\lVert\bm{\mu}_d\rVert_2^2 +{w}_s^2 \lVert \bm{\mu}_s \rVert_2^2}}\right),
\label{eq:test}
\end{equation}
which is exactly the result in the main text.

\subsection{Discussion on Theorem~1}

In this section, we elaborate on the discussion on the results of Theorem~1. For simplicity, our discussion is based on the result with assumptions $\eta=\frac{1}{2}$ and $\sigma^2_d=\sigma^2_s=\sigma^2$ as in the main text, but a similar discussion also applies to the full theorem without these assumptions in a straightforward manner.

\paragraph{Optimality of invariant representation when $\gamma < 0$.}
As we have briefly mentioned in the main text, Theorem 1 shows that when $\gamma < 0$, the invariant featurizer $\Phi_{(1,0)}$ is indeed optimal. To see this, plugging $w_d=1$ and $w_s=0$ into Eq.~\eqref{eq:test} we observe that
\begin{equation}
    R_\mathrm{test}\left(g^*\circ\Phi_{(1,0)}\right) = F\left( -\frac{\lVert\bm{\mu}_d\rVert_2}{\sigma} \right),
\end{equation}
while for any $w_s>0$ and $\gamma < 0$,
\begin{equation}
\begin{aligned}
    R_\mathrm{test}\left(g^*\circ\Phi_{(1,w_s)}\right) &= F\left( -\frac{1}{\sigma}\cdot \frac{\lVert\bm{\mu}_d\rVert_2^2 + \gamma{w}_s^2\lVert\bm{\mu}_s\rVert_2^2}{\sqrt{ \lVert\bm{\mu}_d\rVert_2^2 +{w}_s^2 \lVert \bm{\mu}_s \rVert_2^2}} \right)\\
    &\ge F\left( -\frac{1}{\sigma}\cdot \frac{\lVert\bm{\mu}_d\rVert_2^2}{\sqrt{ \lVert\bm{\mu}_d\rVert_2^2 +{w}_s^2 \lVert \bm{\mu}_s \rVert_2^2}} \right)\\
    &\ge F\left( -\frac{\lVert\bm{\mu}_d\rVert_2}{\sigma}\right)\\
    &=R_\mathrm{test}\left(g^*\circ\Phi_{(1,0)}\right).
\end{aligned}
\end{equation}

\paragraph{Benefits of leveraging silent feature when $\gamma > 1$.}
When $w_s>0$ and $\gamma > 1$, i.e., the silent feature is \emph{more} predictive in the test domain than in the training domain, we have
\begin{equation}
\begin{aligned}
    R_\mathrm{test}\left(g^*\circ\Phi_{(1,w_s)}\right) &= F\left( -\frac{1}{\sigma}\cdot \frac{\lVert\bm{\mu}_d\rVert_2^2 + \gamma{w}_s^2\lVert\bm{\mu}_s\rVert_2^2}{\sqrt{ \lVert\bm{\mu}_d\rVert_2^2 +{w}_s^2 \lVert \bm{\mu}_s \rVert_2^2}} \right)\\
    &\le F\left( -\frac{1}{\sigma}\cdot \frac{\lVert\bm{\mu}_d\rVert_2^2 + \gamma{w}_s^2\lVert\bm{\mu}_s\rVert_2^2}{\sqrt{ \lVert\bm{\mu}_d\rVert_2^2 +\gamma{w}_s^2 \lVert \bm{\mu}_s \rVert_2^2}} \right)\\
    &= F\left( -\frac{1}{\sigma}\cdot \sqrt{ \lVert\bm{\mu}_d\rVert_2^2 +\gamma{w}_s^2 \lVert \bm{\mu}_s \rVert_2^2} \right)\\
    &\le F\left( -\frac{\lVert\bm{\mu}_d\rVert_2}{\sigma}\right)\\
    &=R_\mathrm{test}\left(g^*\circ\Phi_{(1,0)}\right).
\end{aligned}
\end{equation}
In other words, when the silent feature is more predictive in the test domain, explicitly incorporating silent feature leads to \emph{smaller} test domain risk than the classifier that only use dominant feature.

\section{Experimental Settings}
\label{appendix:experiment_settings}

\subsection{Implementation Details of STEP}

Our proposed Silent Feature Preservation (STEP) consists of three components: pre-trained models with silent features, the LP-FT~\cite{kumar2022finetuning} strategy, and a SWAD~\cite{NEURIPS2021_swad} technique, with full implementation details for each component provided below. First, STEP utilizes the self-supervised contrastive learning pre-trained models. STEP-S use ResNet-50~\cite{He_2016_CVPR_resnet} as its backbone, pre-trained on ImageNet~\cite{deng2009imagenet} for 800 epochs by SwAV~\cite{NEURIPS2020_70feb62b}. It ensures that no more data is introduced compared with the supervised learning pre-training model. For a fair comparison, we use the ResNet-50 architecture (without the batch normalization layer) implemented by DomainBed\cite{gulrajani2021in}. For LP-FT, we divide the original total number of fine-tuning iterations into two parts, linear probing, and fine-tuning, to make sure that the total number of iterations does not exceed the baselines. Finally, we solely employ the SWAD in the fine-tuning process, and we basically use the hyperparameters from the original literature, but with a lower tolerance rate of $r$ = 0.1 and smaller evaluation intervals to better fit the properties of the silent features: 25 for VLCS, 250 for DomainNet, and 50 for the other benchmarks.

\subsection{Hyperparameter Search Protocol}

Considering the computational overhead of the hyperparameter search protocol in DomainBed is too expensive, and STEP introduces three more hyperparameters, both leading to a larger search space, we choose to perform grid search on a discrete parameter space. We select the model with the highest accuracy on the training-domain validation set for each set of hyperparameters. Afterward, unlike DomainBed, we choose the optimal hyperparameters by conducting grid search on the test-domain accuracies of the selected models before. As a result, we use the model selection criterion of the test-domain validation set in DomainBed as baselines for a fair comparison. The hyperparameter search space shared by different methods is shown in~\cref{tab:basic-param}, we fix the relatively insignificant parameters such as batch size, weight decay, and dropout rate, and adopt the same discrete three frequently-used learning rates as SWAD. In addition,~\cref{tab:specific-param-steps} provide the specific hyperparameter search spaces for STEP-S. To further reduce the computational cost, we restrict the maximum linear probe and fine-tune iterations to the corresponding three combinations in the list.

\begin{table*}
  \caption{\textbf{Comparison of shared hyperparameter search space.} U and list denote random search on uniform distribution and grid search on discrete space, respectively.}
  \centering
  \begin{tabular}{llllll}
    \toprule
    \textbf{Parameter} & \textbf{DomainBed} & \textbf{SWAD} & \textbf{Reproduced} & \textbf{STEP-S(Ours)} \\
    \midrule
    batch size & $2^{U(3,5.5)}$ & 32 & 32 & 32\\
    learning rate & $10^{U(-5,-3.5)}$ & [1e-5, 3e-5, 5e-5] & [1e-5, 3e-5, 5e-5] & [1e-4, 2e-4, 3e-4]\\
    weight decay & $10^{U(-6,-2)}$ & [1e-6, 1e-4] & 0.0 & 0.0\\
    dropout rate & [0.0, 0.1, 0.5] & [0.0, 0.1, 0.5] & 0.0 & 0.0\\
    \bottomrule
  \end{tabular}
  \label{tab:basic-param}
\end{table*}

\iffalse
\begin{table*}
  \caption{Comparison of \textbf{specific} hyperparameter search space of \textbf{STEP-M} on each benchmark. List denotes grid search on discrete space. The lr and iters parameters refer to the learning rate and maximum number of iterations, respectively. The abbreviation lp stands for linear probing and ft for fine-tuning.}
  \centering
  \begin{tabular}{llllll}
    \toprule
    \textbf{Param} & VLCS & PACS & Office-Home & TerraInc & DomainNet                                                    \\
    \midrule
    lp lr & [1e-4, 5e-4, 1e-3] & [1e-4, 3e-4, 5e-4] & [1e-4, 3e-4, 5e-4] & [5e-4, 1e-3, 5e-3] & [1e-4, 5e-4, 1e-3]       \\
    ft lr & [1e-5, 2e-5, 3e-5] & [1e-5, 2e-5, 3e-5] & [1e-5, 2e-5, 3e-5] & [1e-5, 2e-5, 3e-5] & [1e-5, 2e-5, 3e-5]       \\
    lp iters & [50, 250, 2500] & [500, 1000, 1500] & [500, 1000, 1500] & [500, 1000, 1500] & [1000, 2000, 3000]          \\
    ft iters & [250, 2750, 2500] & [4500, 4000, 3500] & [4500, 4000, 3500] & [4500, 4000, 3500] & [14000, 13000, 12000]  \\
    \bottomrule
  \end{tabular}
  \label{tab:specific-param-stepm}
\end{table*}
\fi

\begin{table*}
  \caption{Comparison of \textbf{specific} hyperparameter search space of \textbf{STEP-S} on each benchmark. List denotes grid search on discrete space. The lr and iters parameters refer to the learning rate and maximum number of iterations, respectively. The abbreviation lp stands for linear probing and ft for fine-tuning.}
  \centering
  \begin{tabular}{llllll}
    \toprule
    \textbf{Param} & VLCS & PACS & Office-Home & TerraInc & DomainNet \\
    \midrule
    lp lr & [1e-3, 5e-3, 1e-2] & [3e-4, 5e-4, 1e-3] & [3e-4, 5e-4, 1e-3] & [3e-4, 5e-4, 1e-3] & [3e-4, 5e-4, 1e-3] \\
    ft lr & [1e-4, 2e-4, 3e-4] & [1e-4, 2e-4, 3e-4] & [1e-4, 2e-4, 3e-4] & [1e-4, 2e-4, 3e-4] & [1e-4, 2e-4, 3e-4] \\
    lp iters & [50, 250, 2500] & [500, 1000, 1500] & [500, 1000, 1500] & [500, 1000, 1500] & [1000, 2000, 3000] \\
    ft iters & [250, 2750, 2500] & [4500, 4000, 3500] & [4500, 4000, 3500] & [4500, 4000, 3500] & [14000, 13000, 12000] \\
    \bottomrule
  \end{tabular}
  \label{tab:specific-param-steps}
\end{table*}

\subsection{Reproducibility}

We provide publicly the source code of STEP, which contains the dataset split files we used, to ensure the reliability and reproducibility of our experimental results. The optimal hyperparameters that correspond to the reported results of STEP-S are also provided in~\cref{tab:optimal-param-steps}, respectively. All experiments are conducted on a single NVIDIA V100 or A100.

\iffalse
\begin{table*}
  \caption{The optimal hyperparameters of \textbf{STEP-M} on each benchmark.}
  \centering
  \begin{tabular}{llllll}
    \toprule
    \textbf{Parameters} & VLCS & PACS & Office-Home & TerraInc & DomainNet \\
    \midrule
    lp lr & 5e-4 & 5e-4 & 1e-4 & 5e-3 & 1e-3 \\
    ft lr & 1e-5 & 1e-5 & 3e-5 & 2e-5 & 3e-5 \\
    lp iters & 2500 & 500 & 1000 & 1000 & 1000 \\
    ft iters & 2500 & 4500 & 4000 & 4000 & 14000 \\
    \bottomrule
  \end{tabular}
  \label{tab:optimal-param-stepm}
\end{table*}
\fi

\begin{table*}
  \caption{The optimal hyperparameters of \textbf{STEP-S} on each benchmark.}
  \centering
  \begin{tabular}{llllll}
    \toprule
    \textbf{Parameters} & VLCS & PACS & Office-Home & TerraInc & DomainNet \\
    \midrule
    lp lr & 1e-2 & 5e-4 & 3e-4 & 3e-4 & 3e-4 \\
    ft lr & 1e-4 & 3e-4 & 3e-4 & 3e-4 & 3e-4 \\
    lp iters & 50 & 500 & 500 & 500 & 1000 \\
    ft iters & 250 & 4500 & 4500 & 4500 & 14000 \\
    \bottomrule
  \end{tabular}
  \label{tab:optimal-param-steps}
\end{table*}

\section{Supplementary Experiment}
\label{appendix:supp_experiment}

\subsection{Linear Probing Experiment Results}

In order to investigate the generalization superiority of silent features on test distributions with significant distribution shifts, we designed experiments with linear probing, updating the linear head without changing the initial features. The experimental results are reported in~\cref{tab:linear-probing}, and it is evident that the self-supervised pre-trained model SwAV achieves better results on all four benchmarks with larger distribution diversity, whereas supervised learning outperforms SwAV by a large margin on Office-Home with the lowest distribution diversity. Since linear probing does not change the features of the model, this experiment demonstrates the generalization ability of silent features in the case of significant distribution shifts. However, in terms of absolute values of the experimental results, the difference between linear probing only and fine-tuning results is obvious, which is due to the shifts of the downstream DG dataset and ImageNet.

\begin{table*}
  \caption{DG accuracies of different learning frameworks as pre-trained backbones on five DG benchmarks using linear probing only. The \textbf{best results} are marked in bold.}
  \centering
  \begin{tabular}{c|ccccc}
    \toprule
    Methods & VLCS & PACS & Office-Home & TerraInc & DomainNet \\
    \midrule
    Supervised + LP & 76.7 $\pm$ 0.4 & 68.2 $\pm$ 0.3 & \textbf{67.7 $\bm{\pm}$ 0.2} & 37.7 $\pm$ 0.5 & 32.5 $\pm$ 0.0\\
    SwAV + LP & \textbf{77.3 $\bm{\pm}$ 0.0} & \textbf{69.3 $\bm{\pm}$ 0.4} & 63.0 $\pm$ 0.1 & \textbf{38.4 $\bm{\pm}$ 0.4} & \textbf{33.0 $\bm{\pm}$ 0.0}\\
    \iffalse
    MoCo v2 + LP & 76.5 $\pm$ 0.1 & 65.9 $\pm$ 0.7 & 59.2 $\pm$ 0.1 & 36.8 $\pm$ 0.5 & 31.7 $\pm$ 0.0\\
    \fi
    \bottomrule
  \end{tabular}
  \label{tab:linear-probing}
\end{table*}

\subsection{Additional Empirical Evidence}

We provide more empirical evidence for this in~\cref{tab:empirical_justification}: the average euclidean distance between the STEP-adapted and original features is smaller, and the mutual information with the input is higher, showing that the performance improvement is due to preserving silent features. The experiment of estimating the mutual information based on the reconstruction loss follows the architecture of~\cite{wang2021revisit}.

\begin{table*}
  \caption{Euclidean distance of features before and after training and estimated mutual information between raw inputs and features (smaller reconstruction loss means higher mutual information).}
  \centering
  \begin{tabular}{l|cc}
    \toprule
    Methods & Avg euclidean dist & Avg reconstruction loss \\
    \midrule
    SwAV+ERM & 0.4844 & 0.5074$\pm$0.0003  \\
    SwAV+STEP & \textbf{0.0732} & \textbf{0.5048\bm{$\pm$}0.0000}  \\
    \bottomrule
  \end{tabular}
  \label{tab:empirical_justification}
\end{table*}

\section{Full Results}
\label{appendix:results}

In this section, we show the detailed experimental results for each domain of the five benchmarks of Table 2 in the main text. We highlight the \textbf{best} and \underline{second-best} results in bold and underlined, respectively. Note that ERM (our runs) and SWAD (our runs) are reproduced results under our hyperparameter search protocol, baselines from the reports in DomainBed are marked with $\dagger$, and other results marked with $\ast$ are from the original literature. We selected the results based on the oracle model selection criterion in DomainBed for a fair comparison.

\begin{table*}[htb]
  \caption{Full results of out-of-domain accuracies on VLCS~\cite{Fang_2013_ICCV_vlcs}.}
  \centering
  \begin{tabular}{l|cccc|c}
    \toprule
    Methods & Caltech101 & PASCAL VOC & LabelMe & SUN09 & Avg.\\
    \midrule
    ERM$^{\dagger}$ & 97.6 $\pm$ 0.3 & \underline{67.9 $\pm$ 0.7} & 70.9 $\pm$ 0.2 & 74.0 $\pm$ 0.6 & 77.6 \\
    ERM(reproduced) & 99.1 $\pm$ 0.4 & 63.9 $\pm$ 0.4 & 78.5 $\pm$ 0.4 & 69.9 $\pm$ 0.5 & 77.9 \\
    IRM$^{\dagger}$ & 97.3 $\pm$ 0.2 & 66.7 $\pm$ 0.1 & 71.0 $\pm$ 2.3 & 72.8 $\pm$ 0.4 & 76.9 \\
    GroupDRO$^{\dagger}$ & 97.7 $\pm$ 0.2 & 65.9 $\pm$ 0.2 & 72.8 $\pm$ 0.8 & 73.4 $\pm$ 1.3 & 77.4 \\
    Mixup$^{\dagger}$ & 97.8 $\pm$ 0.4 & 67.2 $\pm$ 0.4 & 71.5 $\pm$ 0.2 & 75.7 $\pm$ 0.6 & 78.1 \\
    MLDG$^{\dagger}$  & 97.1 $\pm$ 0.5 & 66.6 $\pm$ 0.5 & 71.5 $\pm$ 0.1 & 75.0 $\pm$ 0.9 & 77.5 \\
    CORAL$^{\dagger}$ & 97.3 $\pm$ 0.2 & 67.5 $\pm$ 0.6 & 71.6 $\pm$ 0.6 & 74.5 $\pm$ 0.0 & 77.7 \\
    MMD$^{\dagger}$ & 98.8 $\pm$ 0.0       & 66.4 $\pm$ 0.4       & 70.8 $\pm$ 0.5       & 75.6 $\pm$ 0.4       & 77.9 \\
    DANN$^{\dagger}$ & 99.0 $\pm$ 0.2       & 66.3 $\pm$ 1.2       & 73.4 $\pm$ 1.4       & \textbf{80.1 \bm{$\pm$} 0.5}       & \underline{79.7} \\
    CDANN$^{\dagger}$ & 98.2 $\pm$ 0.1       & \textbf{68.8 \bm{$\pm$} 0.5}       & 74.3 $\pm$ 0.6       & 78.1 $\pm$ 0.5       & \textbf{79.9} \\
    MTL$^{\dagger}$ & 97.9 $\pm$ 0.7       & 66.1 $\pm$ 0.7       & 72.0 $\pm$ 0.4       & 74.9 $\pm$ 1.1       & 77.7 \\
    SagNet$^{\dagger}$ & 97.4 $\pm$ 0.3       & 66.4 $\pm$ 0.4       & 71.6 $\pm$ 0.1       & 75.0 $\pm$ 0.8       & 77.6 \\
    ARM$^{\dagger}$ & 97.6 $\pm$ 0.6       & 66.5 $\pm$ 0.3       & 72.7 $\pm$ 0.6       & 74.4 $\pm$ 0.7       & 77.8 \\
    VREx$^{\dagger}$ & 98.4 $\pm$ 0.2       & 66.4 $\pm$ 0.7       & 72.8 $\pm$ 0.1       & 75.0 $\pm$ 1.4       & 78.1 \\
    RSC$^{\dagger}$  & 98.0 $\pm$ 0.4       & 67.2 $\pm$ 0.3       & 70.3 $\pm$ 1.3       & 75.6 $\pm$ 0.4       & 77.8 \\
    SelfReg$^{\ast}$ & 97.4 $\pm$ 0.4 & 63.5 $\pm$ 0.3 & 72.6 $\pm$ 0.1 & 76.7 $\pm$ 0.7 & 77.5 \\
    SWAD$^{\ast}$ & 98.8 $\pm$ 0.1 & 63.3 $\pm$ 0.3 & 75.3 $\pm$ 0.5 & \underline{79.2 $\pm$ 0.6} & 79.1 \\
    SWAD(reproduced) & \underline{99.2 $\pm$ 0.2} & 61.6 $\pm$ 0.8 & \underline{79.4 $\pm$ 0.5} & 72.0 $\pm$ 1.5 & 78.1 \\
    \midrule
    \iffalse
    STEP-M(ours) & \textbf{99.7 $\pm$ 0.1} & 61.1 $\pm$ 0.2 & \textbf{81.5 $\pm$ 0.4} & 73.5 $\pm$ 0.3 & 79.0 \\
    \fi
    STEP-S(ours) & \textbf{99.4 \bm{$\pm$} 0.1} & 62.9 $\pm$ 1.0 & \textbf{79.5 \bm{$\pm$} 0.2} & 72.7 $\pm$ 0.5 & 78.6 \\
    \bottomrule
  \end{tabular}
  \label{tab:vlcs}
\end{table*}

\begin{table*}[htb]
  \caption{Full results of out-of-domain accuracies on PACS~\cite{Li_2017_ICCV_pacs}.}
  \centering
  \begin{tabular}{l|cccc|c}
    \toprule
    Methods & Art Painting & Cartoon & Photo & Sketch & Avg.\\
    \midrule
    ERM$^{\dagger}$ & 86.5 $\pm$ 1.0       & 81.3 $\pm$ 0.6       & 96.2 $\pm$ 0.3       & \underline{82.7 $\pm$ 1.1}       & 86.7 \\
    ERM(reproduced) & 85.2 $\pm$ 1.1 & 81.3 $\pm$ 1.0 & 95.4 $\pm$ 0.9 & 76.6 $\pm$ 1.6 & 84.6 \\
    IRM$^{\dagger}$ & 84.2 $\pm$ 0.9       & 79.7 $\pm$ 1.5       & 95.9 $\pm$ 0.4       & 78.3 $\pm$ 2.1       & 84.5 \\
    GroupDRO$^{\dagger}$ & 87.5 $\pm$ 0.5       & 82.9 $\pm$ 0.6       & 97.1 $\pm$ 0.3       & 81.1 $\pm$ 1.2       & 87.1 \\
    Mixup$^{\dagger}$ & 87.5 $\pm$ 0.4       & 81.6 $\pm$ 0.7       & 97.4 $\pm$ 0.2       & 80.8 $\pm$ 0.9       & 86.8 \\
    MLDG$^{\dagger}$  & 87.0 $\pm$ 1.2       & 82.5 $\pm$ 0.9       & 96.7 $\pm$ 0.3       & 81.2 $\pm$ 0.6       & 86.8 \\
    CORAL$^{\dagger}$ & 86.6 $\pm$ 0.8       & 81.8 $\pm$ 0.9       & 97.1 $\pm$ 0.5       & 82.7 $\pm$ 0.6       & 87.1 \\
    MMD$^{\dagger}$ & 88.1 $\pm$ 0.8       & 82.6 $\pm$ 0.7       & 97.1 $\pm$ 0.5       & 81.2 $\pm$ 1.2       & 87.2 \\
    DANN$^{\dagger}$ & 87.0 $\pm$ 0.4       & 80.3 $\pm$ 0.6       & 96.8 $\pm$ 0.3       & 76.9 $\pm$ 1.1       & 85.2 \\
    CDANN$^{\dagger}$ & 87.7 $\pm$ 0.6       & 80.7 $\pm$ 1.2       & 97.3 $\pm$ 0.4       & 77.6 $\pm$ 1.5       & 85.8 \\
    MTL$^{\dagger}$ & 87.0 $\pm$ 0.2       & 82.7 $\pm$ 0.8       & 96.5 $\pm$ 0.7       & 80.5 $\pm$ 0.8       & 86.7  \\
    SagNet$^{\dagger}$ & 87.4 $\pm$ 0.5       & 81.2 $\pm$ 1.2       & 96.3 $\pm$ 0.8       & 80.7 $\pm$ 1.1       & 86.4 \\
    ARM$^{\dagger}$ & 85.0 $\pm$ 1.2       & 81.4 $\pm$ 0.2       & 95.9 $\pm$ 0.3       & 80.9 $\pm$ 0.5       & 85.8 \\
    VREx$^{\dagger}$ & 87.8 $\pm$ 1.2       & 81.8 $\pm$ 0.7       & 97.4 $\pm$ 0.2       & 82.1 $\pm$ 0.7       & 87.2 \\
    RSC$^{\dagger}$  & 86.0 $\pm$ 0.7       & 81.8 $\pm$ 0.9       & 96.8 $\pm$ 0.7       & 80.4 $\pm$ 0.5       & 86.2 \\
    SelfReg$^{\ast}$ & 85.9 $\pm$ 0.6 & 81.9 $\pm$ 0.4 & 96.8 $\pm$ 0.1 & 81.4 $\pm$ 0.6 & 86.5 \\
    SWAD$^{\ast}$ & \textbf{89.3 \bm{$\pm$} 0.2} & \underline{83.4 $\pm$ 0.6} & 97.3 $\pm$ 0.3 & 82.5 $\pm$ 0.5 & \underline{88.1} \\
    SWAD(reproduced) & 89.0 $\pm$ 0.3 & 82.3 $\pm$ 0.6 & \underline{97.6 $\pm$ 0.1} & 79.5 $\pm$ 1.0 & 87.1 \\
    \midrule
    \iffalse
    STEP-M(ours) & \textbf{90.1 $\pm$ 0.4} & \textbf{86.0 $\pm$ 0.6} & \textbf{97.8 $\pm$ 0.0} & \underline{82.8 $\pm$ 0.5} & \textbf{89.2} \\
    \fi
    STEP-S(ours) & \underline{88.8 $\pm$ 0.5} & \textbf{85.6 \bm{$\pm$} 0.8} & \textbf{97.7 \bm{$\pm$} 0.2} & \textbf{82.9 $\pm$ 1.4} & \textbf{88.7} \\
    \bottomrule
  \end{tabular}
  \label{tab:pacs}
\end{table*}

\begin{table*}[htb]
  \caption{Full results of out-of-domain accuracies on Office-Home~\cite{Venkateswara_2017_officehome}.}
  \centering
  \begin{tabular}{l|cccc|c}
    \toprule
    Methods & Art & Clipart & Product & Real-World & Avg.\\
    \midrule
    ERM$^{\dagger}$ & 61.7 $\pm$ 0.7       & 53.4 $\pm$ 0.3       & 74.1 $\pm$ 0.4       & 76.2 $\pm$ 0.6       & 66.4 \\
    ERM(reproduced) & 60.8 $\pm$ 1.2 & 53.8 $\pm$ 0.5 & 75.6 $\pm$ 0.5 & 76.1 $\pm$ 0.8 & 66.6 \\
    IRM$^{\dagger}$ & 56.4 $\pm$ 3.2       & 51.2 $\pm$ 2.3       & 71.7 $\pm$ 2.7       & 72.7 $\pm$ 2.7       & 63.0 \\
    GroupDRO$^{\dagger}$ & 60.5 $\pm$ 1.6       & 53.1 $\pm$ 0.3       & 75.5 $\pm$ 0.3       & 75.9 $\pm$ 0.7       & 66.2 \\
    Mixup$^{\dagger}$ & 63.5 $\pm$ 0.2       & 54.6 $\pm$ 0.4       & 76.0 $\pm$ 0.3       & 78.0 $\pm$ 0.7       & 68.0 \\
    MLDG$^{\dagger}$  & 60.5 $\pm$ 0.7       & 54.2 $\pm$ 0.5       & 75.0 $\pm$ 0.2       & 76.7 $\pm$ 0.5       & 66.6 \\
    CORAL$^{\dagger}$ & 64.8 $\pm$ 0.8       & 54.1 $\pm$ 0.9       & 76.5 $\pm$ 0.4       & 78.2 $\pm$ 0.4       & 68.4 \\
    MMD$^{\dagger}$ & 60.4 $\pm$ 1.0       & 53.4 $\pm$ 0.5       & 74.9 $\pm$ 0.1       & 76.1 $\pm$ 0.7       & 66.2 \\
    DANN$^{\dagger}$ & 60.6 $\pm$ 1.4       & 51.8 $\pm$ 0.7       & 73.4 $\pm$ 0.5       & 75.5 $\pm$ 0.9       & 65.3 \\
    CDANN$^{\dagger}$ & 57.9 $\pm$ 0.2       & 52.1 $\pm$ 1.2       & 74.9 $\pm$ 0.7       & 76.2 $\pm$ 0.2       & 65.3 \\
    MTL$^{\dagger}$ & 60.7 $\pm$ 0.8       & 53.5 $\pm$ 1.3       & 75.2 $\pm$ 0.6       & 76.6 $\pm$ 0.6       & 66.5  \\
    SagNet$^{\dagger}$ & 62.7 $\pm$ 0.5       & 53.6 $\pm$ 0.5       & 76.0 $\pm$ 0.3       & 77.8 $\pm$ 0.1       & 67.5 \\
    ARM$^{\dagger}$ & 58.8 $\pm$ 0.5       & 51.8 $\pm$ 0.7       & 74.0 $\pm$ 0.1       & 74.4 $\pm$ 0.2       & 64.8 \\
    VREx$^{\dagger}$ & 59.6 $\pm$ 1.0       & 53.3 $\pm$ 0.3       & 73.2 $\pm$ 0.5       & 76.6 $\pm$ 0.4       & 65.7 \\
    RSC$^{\dagger}$  & 61.7 $\pm$ 0.8       & 53.0 $\pm$ 0.9       & 74.8 $\pm$ 0.8       & 76.3 $\pm$ 0.5       & 66.5 \\
    SelfReg$^{\ast}$ & \underline{64.9 $\pm$ 0.8} & 55.4 $\pm$ 0.6 & \underline{78.4 $\pm$ 0.2} & 78.8 $\pm$ 0.1 & \underline{69.4} \\
    SWAD$^{\ast}$ & \textbf{66.1 \bm{$\pm$} 0.4} & \textbf{57.7 \bm{$\pm$} 0.4} & \underline{78.4 $\pm$ 0.1} & \textbf{80.2 \bm{$\pm$} 0.5} & \textbf{70.6} \\
    SWAD(reproduced) & \textbf{66.1 \bm{$\pm$} 0.4} & \underline{57.6 $\pm$ 0.8} & \textbf{79.0 \bm{$\pm$} 0.1} & 79.5 $\pm$ 0.4 & \textbf{70.6} \\
    \midrule
    \iffalse
    STEP-M(ours) & 62.4 $\pm$ 0.6 & 55.4 $\pm$ 0.3 & 75.4 $\pm$ 0.2 & 77.7 $\pm$ 0.1 & 67.7 \\
    \fi
    STEP-S(ours) & 64.4 $\pm$ 0.4 & 52.4 $\pm$ 0.4 & 78.2 $\pm$ 0.2 & \underline{79.6 $\pm$ 0.1} & 68.6 \\
    \bottomrule
  \end{tabular}
  \label{tab:office}
\end{table*}

\begin{table*}[htb]
  \caption{Full results of out-of-domain accuracies on Terra Incognita~\cite{Beery_2018_ECCV_terra}.}
  \centering
  \begin{tabular}{l|cccc|c}
    \toprule
    Methods & L100 & L38 & L43 & L46 & Avg.\\
    \midrule
    ERM$^{\dagger}$ & 59.4 $\pm$ 0.9       & 49.3 $\pm$ 0.6       & \underline{60.1 $\pm$ 1.1}       & 43.2 $\pm$ 0.5       & 53.0 \\
    ERM(reproduced) & 57.6 $\pm$ 3.2 & 40.8 $\pm$ 5.8 & 57.2 $\pm$ 1.8 & 38.9 $\pm$ 3.1 & 48.6 \\
    IRM$^{\dagger}$ & 56.5 $\pm$ 2.5       & 49.8 $\pm$ 1.5       & 57.1 $\pm$ 2.2       & 38.6 $\pm$ 1.0       & 50.5 \\
    GroupDRO$^{\dagger}$ & 60.4 $\pm$ 1.5       & 48.3 $\pm$ 0.4       & 58.6 $\pm$ 0.8       & 42.2 $\pm$ 0.8       & 52.4 \\
    Mixup$^{\dagger}$ & \textbf{67.6 \bm{$\pm$} 1.8}       & \underline{51.0 $\pm$ 1.3}       & 59.0 $\pm$ 0.0       & 40.0 $\pm$ 1.1       & \underline{54.4} \\
    MLDG$^{\dagger}$  & 59.2 $\pm$ 0.1       & 49.0 $\pm$ 0.9       & 58.4 $\pm$ 0.9       & 41.4 $\pm$ 1.0       & 52.0 \\
    CORAL$^{\dagger}$ & 60.4 $\pm$ 0.9       & 47.2 $\pm$ 0.5       & 59.3 $\pm$ 0.4       & \textbf{44.4 \bm{$\pm$} 0.4}       & 52.8 \\
    MMD$^{\dagger}$ & 60.6 $\pm$ 1.1       & 45.9 $\pm$ 0.3       & 57.8 $\pm$ 0.5       & 43.8 $\pm$ 1.2       & 52.0 \\
    DANN$^{\dagger}$ & 55.2 $\pm$ 1.9       & 47.0 $\pm$ 0.7       & 57.2 $\pm$ 0.9       & 42.9 $\pm$ 0.9       & 50.6 \\
    CDANN$^{\dagger}$ & 56.3 $\pm$ 2.0       & 47.1 $\pm$ 0.9       & 57.2 $\pm$ 1.1       & 42.4 $\pm$ 0.8       & 50.8 \\
    MTL$^{\dagger}$ & 58.4 $\pm$ 2.1       & 48.4 $\pm$ 0.8       & 58.9 $\pm$ 0.6       & 43.0 $\pm$ 1.3       & 52.2  \\
    SagNet$^{\dagger}$ & 56.4 $\pm$ 1.9       & 50.5 $\pm$ 2.3       & 59.1 $\pm$ 0.5       & 44.1 $\pm$ 0.6       & 52.5 \\
    ARM$^{\dagger}$ & 60.1 $\pm$ 1.5       & 48.3 $\pm$ 1.6       & 55.3 $\pm$ 0.6       & 40.9 $\pm$ 1.1       & 51.2 \\
    VREx$^{\dagger}$ & 56.8 $\pm$ 1.7       & 46.5 $\pm$ 0.5       & 58.4 $\pm$ 0.3       & 43.8 $\pm$ 0.3       & 51.4 \\
    RSC$^{\dagger}$  & 59.9 $\pm$ 1.4       & 46.7 $\pm$ 0.4       & 57.8 $\pm$ 0.5       & \underline{44.3 $\pm$ 0.6}       & 52.1 \\
    SelfReg$^{\ast}$ & 56.8 $\pm$ 0.9 & 44.7 $\pm$ 0.6 & 59.6 $\pm$ 0.3 & 42.9 $\pm$ 0.8 & 51.0 \\
    SWAD$^{\ast}$ & 55.4 $\pm$ 0.0 & 44.9 $\pm$ 1.1 & 59.7 $\pm$ 0.4 & 39.9 $\pm$ 0.2 & 50.0 \\
    SWAD(reproduced) & 57.4 $\pm$ 1.6 & 46.2 $\pm$ 1.4 & 59.7 $\pm$ 0.6 & 39.1 $\pm$ 1.0 & 50.6 \\
    \midrule
    \iffalse
    STEP-M(ours) & 56.7 $\pm$ 1.0 & \underline{53.8 $\pm$ 1.1} & \underline{60.7 $\pm$ 0.6} & 43.5 $\pm$ 0.3 & 53.6 \\
    \fi
    STEP-S(ours) & \underline{62.4 $\pm$ 1.1} & \textbf{54.4 \bm{$\pm$} 1.0} & \textbf{61.7 \bm{$\pm$} 0.9} & 44.1 $\pm$ 0.0 & \textbf{55.6} \\
    \bottomrule
  \end{tabular}
  \label{tab:terrainc}
\end{table*}

\begin{table*}[htb]
  \caption{Full results of out-of-domain accuracies on DomainNet~\cite{Peng_2019_ICCV_domainnet}.}
  \centering
  \begin{tabular}{l|cccccc|c}
    \toprule
    Methods & clipart & infograph & painting & quickdraw & real & sketch & Avg.\\
    \midrule
    ERM$^{\dagger}$ & 58.6 $\pm$ 0.3       & 19.2 $\pm$ 0.2       & 47.0 $\pm$ 0.3       & 13.2 $\pm$ 0.2       & 59.9 $\pm$ 0.3       & 49.8 $\pm$ 0.4       & 41.3 \\
    ERM(reproduced) & 60.9 $\pm$ 0.4 & 19.5 $\pm$ 0.6 & 48.1 $\pm$ 0.6 & 13.0 $\pm$ 0.2 & 62.4 $\pm$ 0.3 & 49.9 $\pm$ 1.3 & 42.3 \\
    IRM$^{\dagger}$ & 40.4 $\pm$ 6.6       & 12.1 $\pm$ 2.7       & 31.4 $\pm$ 5.7       & 9.8 $\pm$ 1.2        & 37.7 $\pm$ 9.0       & 36.7 $\pm$ 5.3       & 28.0 \\
    GroupDRO$^{\dagger}$ & 47.2 $\pm$ 0.5       & 17.5 $\pm$ 0.4       & 34.2 $\pm$ 0.3       & 9.2 $\pm$ 0.4        & 51.9 $\pm$ 0.5       & 40.1 $\pm$ 0.6       & 33.4 \\
    Mixup$^{\dagger}$ & 55.6 $\pm$ 0.1       & 18.7 $\pm$ 0.4       & 45.1 $\pm$ 0.5       & 12.8 $\pm$ 0.3       & 57.6 $\pm$ 0.5       & 48.2 $\pm$ 0.4       & 39.6 \\
    MLDG$^{\dagger}$  & 59.3 $\pm$ 0.1       & 19.6 $\pm$ 0.2       & 46.8 $\pm$ 0.2       & 13.4 $\pm$ 0.2       & 60.1 $\pm$ 0.4       & 50.4 $\pm$ 0.3       & 41.6 \\
    CORAL$^{\dagger}$ & 59.2 $\pm$ 0.1       & 19.9 $\pm$ 0.2       & 47.4 $\pm$ 0.2       & 14.0 $\pm$ 0.4       & 59.8 $\pm$ 0.2       & 50.4 $\pm$ 0.4       & 41.8 \\
    MMD$^{\dagger}$ & 32.2 $\pm$ 13.3      & 11.2 $\pm$ 4.5       & 26.8 $\pm$ 11.3      & 8.8 $\pm$ 2.2        & 32.7 $\pm$ 13.8      & 29.0 $\pm$ 11.8      & 23.5 \\
    DANN$^{\dagger}$ & 53.1 $\pm$ 0.2       & 18.3 $\pm$ 0.1       & 44.2 $\pm$ 0.7       & 11.9 $\pm$ 0.1       & 55.5 $\pm$ 0.4       & 46.8 $\pm$ 0.6       & 38.3 \\
    CDANN$^{\dagger}$ & 54.6 $\pm$ 0.4       & 17.3 $\pm$ 0.1       & 44.2 $\pm$ 0.7       & 12.8 $\pm$ 0.2       & 56.2 $\pm$ 0.4       & 45.9 $\pm$ 0.5       & 38.5 \\
    MTL$^{\dagger}$ & 58.0 $\pm$ 0.4       & 19.2 $\pm$ 0.2       & 46.2 $\pm$ 0.1       & 12.7 $\pm$ 0.2       & 59.9 $\pm$ 0.1       & 49.0 $\pm$ 0.0       & 40.8  \\
    SagNet$^{\dagger}$ & 57.7 $\pm$ 0.3       & 19.1 $\pm$ 0.1       & 46.3 $\pm$ 0.5       & 13.5 $\pm$ 0.4       & 58.9 $\pm$ 0.4       & 49.5 $\pm$ 0.2       & 40.8 \\
    ARM$^{\dagger}$ & 49.6 $\pm$ 0.4       & 16.5 $\pm$ 0.3       & 41.5 $\pm$ 0.8       & 10.8 $\pm$ 0.1       & 53.5 $\pm$ 0.3       & 43.9 $\pm$ 0.4       & 36.0 \\
    VREx$^{\dagger}$ & 43.3 $\pm$ 4.5       & 14.1 $\pm$ 1.8       & 32.5 $\pm$ 5.0       & 9.8 $\pm$ 1.1        & 43.5 $\pm$ 5.6       & 37.7 $\pm$ 4.5       & 30.1 \\
    RSC$^{\dagger}$  & 55.0 $\pm$ 1.2       & 18.3 $\pm$ 0.5       & 44.4 $\pm$ 0.6       & 12.5 $\pm$ 0.1       & 55.7 $\pm$ 0.7       & 47.8 $\pm$ 0.9       & 38.9 \\
    SelfReg$^{\ast}$ & 62.4 $\pm$ 0.1 & \underline{22.6 $\pm$ 0.1} & 51.8 $\pm$ 0.1 & 14.3 $\pm$ 0.1 & 62.5 $\pm$ 0.2 & 53.8 $\pm$ 0.3 & 44.6 \\
    SWAD$^{\ast}$ & \underline{66.0 $\pm$ 0.1} & 22.4 $\pm$ 0.3 & \underline{53.5 $\pm$ 0.1} & \textbf{16.1 \bm{$\pm$} 0.2} & \underline{65.8 $\pm$ 0.4} & \underline{55.5 $\pm$ 0.3} & \underline{46.5} \\
    SWAD(reproduced) & 65.0 $\pm$ 0.4 & 21.9 $\pm$ 0.1 & 52.7 $\pm$ 0.3 & \underline{15.6 $\pm$ 0.2} & 65.2 $\pm$ 0.4 & 55.2 $\pm$ 0.3 & 45.9 \\
    \midrule
    \iffalse
    STEP-M(ours) & 64.9 $\pm$ 0.5 & 22.2 $\pm$ 0.1 & \underline{53.6 $\pm$ 0.1} & \underline{15.8 $\pm$ 0.1} & 64.1 $\pm$ 0.1 & \underline{56.1 $\pm$ 0.1} & 46.1 \\
    \fi
    STEP-S(ours) & \textbf{67.3 \bm{$\pm$} 0.2} & \textbf{22.9 \bm{$\pm$} 0.1} & \textbf{54.4 \bm{$\pm$} 0.1} & 15.1 $\pm$ 0.2 & \textbf{67.1 \bm{$\pm$} 0.1} & \textbf{56.7 \bm{$\pm$} 0.2} & \textbf{47.3} \\
    \bottomrule
  \end{tabular}
  \label{tab:domainnet}
\end{table*}

\section{Complexity Analysis.}
\label{appendix:complexity}

STEP does not require additional GPU memory. The average wall-clock training time for one epoch of LP/FT phases is about 50\%/110\% of ERM, respectively. Since the number of iterations of STEP is no more than ERM, their time complexity is roughly equal.

\end{document}